\documentclass[11pt,letterpaper]{article}
\usepackage[margin=1in]{geometry}
\usepackage{amsmath,amssymb,amsthm,bm}
\usepackage{graphicx}
\usepackage{booktabs}
\usepackage{natbib}
\usepackage{setspace}
\usepackage[font={small,stretch=1}]{caption}
\usepackage{etoolbox}
\usepackage{hyperref}
\usepackage{enumitem}
\usepackage{algorithm}
\usepackage{algpseudocode}
\usepackage{bbm}
\usepackage[T1]{fontenc}
\usepackage[utf8]{inputenc}
\usepackage{lmodern}
\IfFileExists{microtype.sty}{\usepackage{microtype}}{}
\hypersetup{
  pdftitle={Conformal Prediction for Spatially Dependent Data via Sequential Whitening},
  colorlinks=true,
  linkcolor=blue,
  citecolor=blue,
  urlcolor=blue
}
\theoremstyle{plain}
\newtheorem{theorem}{Theorem}
\newtheorem{proposition}{Proposition}

\newtheorem{lemma}{Lemma}
\theoremstyle{definition}
\newtheorem{assumption}{Assumption}

\theoremstyle{remark}

\newcommand{\R}{\mathbb{R}}
\newcommand{\E}{\mathbb{E}}
\newcommand{\Prob}{\mathbb{P}}
\newcommand{\Var}{\mathrm{Var}}
\newcommand{\Cov}{\mathrm{Cov}}
\newcommand{\Sig}{\bm{\Sigma}}

\newcommand{\bSigma}{\bm{\Sigma}}
\newcommand{\bx}{\bm{x}}
\newcommand{\bs}{\bm{s}}
\newcommand{\br}{\bm{r}}
\newcommand{\be}{\bm{e}}
\newcommand{\bk}{\bm{k}}
\newcommand{\bL}{\bm{L}}

\newcommand{\KL}{\mathrm{KL}}
\newcommand{\Matern}{Mat\'ern}
\newcommand{\krel}{\kappa_{\mathrm{rel}}}

\newcommand{\bone}{\mathbf{1}}
\newcommand{\Corr}{\mathrm{Corr}}
\newcommand{\bI}{\mathbf{I}}

\newcommand{\bell}{\boldsymbol{\ell}}

\AtBeginEnvironment{figure}{\singlespacing}
\AtBeginEnvironment{table}{\singlespacing}
\begin{document}

\def\spacingset#1{\renewcommand{\baselinestretch}{#1}\small\normalsize}
\spacingset{1}

\title{\bf Conformal Prediction for Spatially Dependent Data via Sequential Whitening}
\author{
  Ayush Baran Sen\\
  Indian Statistical Institute, Kolkata, India\\
  and\\
  Arkajyoti Saha\thanks{Corresponding author: \texttt{arkajyos@uci.edu}}\\
  Department of Statistics, University of California, Irvine
}
\date{}
\maketitle

\bigskip
\begin{abstract}

Split conformal prediction uses prediction errors on held-out (calibration) data to determine how wide the prediction intervals should be. It guarantees distribution-free finite-sample coverage when these errors and the error at the target site are exchangeable. This assumption may fail under spatial dependence and nonrandom sampling geometry. Existing spatial methods use fitting residuals to remove the predictable part of spatial variation from calibration and target errors. However, the spatial variation that only the calibration residuals can predict remains in both the target and calibration errors, reducing the efficiency and stability of the interval. We address this by additionally conditioning on the calibration residuals sequentially, which scales to large networks through nearest-neighbour approximations. Under a correct working covariance and an elliptical residual law, the resulting interval has exact finite-sample coverage under any spatial design, and under further conditions it is asymptotically oracle efficient. We also bound coverage loss under covariance misspecification and develop a diagnostic that identifies regions at risk of undercoverage. In simulated data, our method produces narrower and more stable intervals than global and localized state-of-the-art alternatives. In a national PM$_{2.5}$ application, it produces narrower intervals within the network and identifies regions at risk of coverage failure.
\end{abstract}

\noindent{\it Keywords:} spatial conformal prediction; exchangeability; Gaussian process; kriging; covariance misspecification

\section{Introduction}
\label{sec:intro}

Traditional uncertainty quantification approaches often require the assumed parametric model to be correctly specified to guarantee validity of the inference. Conformal prediction uses held-out observations (calibration data) to calibrate the fitted model's errors and uses that to build the prediction intervals. This guarantees finite-sample marginal coverage as long as the errors on these held-out observations and the target location are exchangeable, and specifically does not require the mean model or the error distribution to be correctly specified \citep{angelopoulos2023}.

In this article, we focus on inference for point-referenced spatial data. Such data are common across scientific fields, including environmental science, agriculture, ecology, and epidemiology. Prediction at unobserved locations is crucial here for policy and resource decisions. Some examples are assessing pollution exceedances at unmonitored sites \citep{berrocal2010}, allocating inputs across agricultural fields \citep{oliver2010}, and deciding where to drill for mineral deposits \citep{matheron1963}. The data-generating process in these applications is complex and typically unknown, and hence requires intervals that remain reliable even when the spatial model is imperfect.

Conformal inference may appear to offer a solution, but it is natural to wonder what happens to its exchangeability requirement under spatial dependence. The answer depends on how the locations are sampled. Under random location sampling, the scores can be dependent yet exchangeable. However, ordinary split conformal is inefficient here. It does not use the residuals in the observed data to predict the spatial variation in the target error, resulting in a wider interval that has the same width at every target location. The errors in the held-out data are spatially dependent, which makes the calibration quantile, and the resulting interval, unstable. 

\citet{mao2024,moon2026conformal} offer principled ways to address these problems. For split conformal, they propose to predict the spatial variation in the calibration and target residuals from the fitting residuals. Their interval widths also adapt to the prediction uncertainty since they standardize the errors using their respective predictive standard deviations. However, conditioning only on the fitting set still leaves the spatial variation captured solely by the calibration residuals in both the calibration and target errors, offering only a partial solution to the problems with naive split conformal.

When locations are not sampled at random, the spatial variation left in the calibration residuals can make the target and calibration scores incomparable. This violates the exchangeability assumption required for validity of conformal inference. \citet{mao2024,guan2023,jiang2026} propose to bypass this problem by giving greater weight to calibration points near the target. This makes the threshold more representative of the target neighbourhood, even in the presence of correlation in calibration residuals. Localization relies on fewer, often strongly correlated, observations, which can make the resulting quantile unstable. Moreover, strong localization can also produce an unbounded interval if there is not enough data near the target. Additionally, this mechanism does not account for the correlation captured by the calibration residuals and faces the same issues as the global method.

In this article, we develop a spatial conformal procedure that uses the predictive information in both the fitting and calibration residuals. Our main contributions are:
\begin{enumerate}
\item \emph{Methodology.} We introduce a way to use the predictive power of both the fitting and calibration residuals via sequential conditioning. The resulting interval adapts to prediction uncertainty at the target and is narrower and more stable than intervals from existing spatial conformal methods. When the working covariance adequately represents the joint residual structure, the transformation also restores comparability between the calibration and target scores.
\item \emph{Validity and optimality.} Under a correct working covariance matrix and an elliptical residual distribution, we establish exact finite-sample marginal coverage under any spatial design. Under additional assumptions, the interval is asymptotically as short as the shortest conditionally valid interval.

\item \emph{Robustness and diagnostics.} We bound coverage loss under covariance misspecification and characterize target-specific coverage by comparing the variation remaining in the target error with that in the calibration errors. This characterization yields a diagnostic that can identify regions at risk of coverage failure without knowing the true covariance.
\end{enumerate}

The rest of the paper is organized as follows. Section~\ref{sec:prob_def} reviews conformal prediction for independent data and formulates the spatial prediction problem. Section~\ref{sec:method} introduces our method, and Section~\ref{sec:theory} develops its theoretical guarantees. Sections~\ref{sec:sim} and~\ref{sec:data} present the simulation study and an application to fine-particulate-matter monitoring, respectively. Section~\ref{sec:disc} concludes with a discussion and directions for future research. Proofs and additional results appear in the Supplementary Material.

\section{Problem definition}
\label{sec:prob_def}
\subsection{Conformal prediction for independent data}
\label{sec:prob_def_iid}
Let the data be denoted as $(\bx_i,y_i)\in \R^p\times\R$, $i=1,\ldots,n$, which are independent and identically distributed observations from some distribution $P$. Assume that $f(\bx)=\E(y\mid\bx)$ exists. For any given $\alpha\in(0,1)$ and a new data point $(\bx_0,y_0)\sim P$, we want to compute a prediction interval $\widehat C_\alpha(\bx_0)$ which satisfies $\Prob\{y_0\in\widehat C_\alpha(\bx_0)\}\geq1-\alpha$. The probability is over all the data and the new pair. Ideally, the interval should be valid in finite samples for every $P$ and every fitted $\widehat f$, and be as narrow as possible. Split conformal prediction \citep{papadopoulos2002,vovk2005,lei2018} meets the first validity requirement. The procedure is as follows:

\begin{enumerate}[label=(\alph*)]

\item Partition $\{1,\ldots,n\}$ independently of the data into a fitting set $\mathcal F$ and a calibration set $\mathcal C$ of size $m$, and fit any $\widehat f$ using only $\mathcal F$.

\item For each $i\in\mathcal C$, compute the residual $r_i=y_i-\widehat f(\bx_i)$, which measures how poorly $\widehat f$ predicts the $i$th calibration observation.

\item At $\bx_0$, compute the conformal interval as
\begin{equation}
\label{eqn:split_interval}
\widehat C_\alpha(\bx_0)
=
\widehat f(\bx_0)\pm\widehat q_{1-\alpha},
\qquad
\widehat q_{1-\alpha}
=
|r|_{(\lceil(1-\alpha)(m+1)\rceil)},
\end{equation}
where $|r|_{(1)}\leq\cdots\leq|r|_{(m)}$ are the ordered calibration scores and $|r|_{(m+1)}:=+\infty$. The interval is therefore unbounded when $\alpha<1/(m+1)$.
\end{enumerate}

The following lemma \citep{papadopoulos2002,lei2018} ensures finite-sample coverage, as long as the calibration and target scores are \emph{exchangeable}, that is, their joint distribution is invariant to any permutation.

\begin{lemma}
\label{lem:split}
Let $S_i=|r_i|$ for $i\in\mathcal C$ and
$S_0=|y_0-\widehat f(\bx_0)|$. If these $m+1$ scores are exchangeable, then the interval in
\eqref{eqn:split_interval} satisfies
$\Prob\{y_0\in\widehat C_\alpha(\bx_0)\}\geq1-\alpha$.
If the scores are also almost surely distinct, then
\begin{equation}
\label{eqn:split_guarantee}
1-\alpha
\leq
\Prob\{y_0\in\widehat C_\alpha(\bx_0)\}
\leq
1-\alpha+\frac{1}{m+1}.
\end{equation}
\end{lemma}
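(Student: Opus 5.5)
The plan is to reduce coverage to a statement about the rank of $S_0$ among the $m+1$ scores $S_0,S_1,\ldots,S_m$ (indexing the calibration set as $1,\ldots,m$), and then use exchangeability to control the distribution of that rank. Set $k=\lceil(1-\alpha)(m+1)\rceil$. If $k=m+1$, then $\widehat q_{1-\alpha}=+\infty$ and coverage is trivially $1$, so both bounds hold: $1\le 1-\alpha+1/(m+1)$ is equivalent to $\alpha\le 1/(m+1)$, which is the case $k=m+1$. Hence assume $k\le m$. The first step is the deterministic equivalence
\[
y_0\in\widehat C_\alpha(\bx_0)\iff S_0\le |r|_{(k)} \iff S_0\le S_{(k)}^{+},
\]
where $S^{+}_{(1)}\le\cdots\le S^{+}_{(m+1)}$ are the order statistics of all $m+1$ scores. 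The second equivalence holds because adding $S_0$ to the list cannot change whether $S_0$ lies at or below the $k$th smallest value. If $S_0\le|r|_{(k)}$, then at most $k-1$ calibration scores are strictly below $S_0$, so $S_0\le S^+_{(k)}$. Conversely, if $S_0>|r|_{(k)}$, then at least $k$ calibration scores are strictly below $S_0$, so $S^+_{(k)}<S_0$.

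For the lower bound, define $E_i=\{S_i\le S^+_{(k)}\}$ for $i=0,\ldots,m$. The map $(S_0,\ldots,S_m)\mapsto S^+_{(k)}$ is symmetric, so exchangeability gives $\Prob(E_i)=\Prob(E_0)$ for every $i$. Deterministically, at least $k$ indices satisfy $S_i\le S^+_{(k)}$, so $\sum_i \bone_{E_i}\ge k$. Taking expectations yields $(m+1)\Prob(E_0)\ge k\ge(1-\alpha)(m+1)$, which is the first claim.

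For the upper bound under almost-sure distinctness, there are no ties, so exactly $k$ indices satisfy $S_i\le S^+_{(k)}$ almost surely. The same averaging then gives $\Prob(E_0)=k/(m+1)$. Finally, $k<(1-\alpha)(m+1)+1$, so $\Prob(E_0)<1-\alpha+1/(m+1)$, which gives \eqref{eqn:split_guarantee}.

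I expect the only delicate points to be these. The first is the rank-equivalence step, in particular handling ties so that ``$\le$'' is used consistently. The second is noting that exchangeability is needed only for the symmetric functional $S^+_{(k)}$. Randomness in $\widehat f$ causes no issue, because the hypothesis is placed directly on the scores.
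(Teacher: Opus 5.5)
Your proof is correct and uses the same rank argument the paper relies on, both in its sketch after the lemma and in Step~3 of the proof of Theorem~\ref{thm:whitened_interval}: coverage is the event that the target score is among the $k=\lceil(1-\alpha)(m+1)\rceil$ smallest of the $m+1$ scores, and this event has probability exactly $k/(m+1)$ when exchangeable scores have no ties. Your averaging over the symmetric events $\{S_i\le S^+_{(k)}\}$ is a more careful version of that argument, and it also proves the first claim when ties can occur, a case the paper's uniform-rank sketch does not treat and leaves to the cited references.
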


When scores are distinct, exchangeability implies that the rank of the target score is uniformly distributed among the $m+1$ possible ranks, giving the lower bound in \eqref{eqn:split_guarantee}.  The upper bound comes from rounding the conformal quantile. Exchangeability is trivially satisfied under i.i.d.\ data. We next examine how this argument changes under spatial dependence.

\subsection{What changes for spatial data}
\label{sec:prob_def_spatial}
Suppose we observe $\mathcal D_n=\{(\bx_i,\bs_i,y_i):i=1,\ldots,n\}$, where $\bs_i\in\R^2$ is the spatial location, typically geographical coordinates. We want to predict at a new location $(\bx_0,\bs_0)$. We assume the model
\begin{equation}
\label{eqn:model}
y_i=f(\bx_i)+\eta(\bs_i)+\varepsilon_i,
\qquad
i=0,1,\ldots,n,
\end{equation}
where $f(\bx)$ is the covariate effect and $\eta$ is the spatial structure the covariates do not capture. The process $\eta$ has mean zero and covariance $C_\theta$, with $C_\theta(\bs,\bs)=\sigma^2$. The errors $\varepsilon_i$ are i.i.d.\ with mean zero and variance $\tau^2>0$, and independent of $\eta$. The covariates may vary with location, but where the data are sampled is unrelated to $\eta$ and the errors. We want a prediction interval $\widehat C_\alpha(\bx_0,\bs_0)$ which satisfies $\Prob\{y_0\in\widehat C_\alpha(\bx_0,\bs_0)\}\geq1-\alpha$. Throughout, we treat the locations, the covariates, and the fitted mean and covariance as fixed, and take probabilities over the responses alone. Under random sampling, they are taken over the sampled locations as well.

The residuals are spatially dependent. However, naive conformal inference may still be valid here, since Lemma~\ref{lem:split} only requires exchangeability. Whether exchangeability holds depends on the sampling design. We consider the two cases in turn, starting with the one where it holds.

\subsubsection{When random sampling preserves exchangeability}

\label{sec:prob_def_exch}

Suppose the labelled and target data are sampled independently from the same design, and the fitting--calibration split is independent of the data. Then calibration and target scores are exchangeable \citep[Lemma~1]{mao2024}, so Lemma~\ref{lem:split} applies and naive split conformal is valid. However, it ignores the spatial correlation, and so is inefficient in three ways:
\begin{enumerate}[label=(\alph*)] \item \emph{The width is unstable.} In spatially correlated data, residuals from nearby points are positively correlated. This makes the effective sample size of the calibration set smaller than $m$. As a result, the interval width varies more than it would with $m$ independent scores.

\item \emph{Predictable target variation remains.} Observed residuals contain information about the spatial component of the target error. Naive split conformal prediction does not make use of the predictive power of these residuals. This leaves more unexplained variation for the interval to cover, resulting in a wider interval.

\item \emph{Width does not adapt to location.} Naive conformal intervals have the same width at every target. However, in spatial statistics, isolated locations are harder to predict than those surrounded by observations. Ideally, prediction uncertainty should be reflected in the interval width, which is not the case here.

\end{enumerate}

The concerns are partially addressed by the global spatial conformal approach of \citet{mao2024} (GSCP). Let $\br_{\mathcal F}$ be the vector of fitting residuals and $\bSigma_{\mathcal F}^{\mathrm w}$ its working covariance. For a location $\bs$, write $\bk_{\mathcal F}(\bs)$ for its working covariance with the fitting residuals and $\bSigma^{\mathrm w}(\bs,\bs)$ for its working variance. The working kriging predictor and prediction variance are 
\begin{equation}
\label{eqn:kriging}
\widehat\mu_{\mathcal F}(\bs)
=
\bk_{\mathcal F}(\bs)^\top
(\bSigma_{\mathcal F}^{\mathrm w})^{-1}\br_{\mathcal F},
\qquad
\widehat\nu_{\mathcal F}(\bs)
=
\bSigma^{\mathrm w}(\bs,\bs)
-
\bk_{\mathcal F}(\bs)^\top
(\bSigma_{\mathcal F}^{\mathrm w})^{-1}\bk_{\mathcal F}(\bs).
\end{equation}

GSCP uses the calibration scores $|r_i-\widehat\mu_{\mathcal F}(\bs_i)|/\widehat\nu_{\mathcal F}(\bs_i)^{1/2}$. Let $\widehat q^{\,\mathrm G}_{1-\alpha}$ denote their $\lceil(1-\alpha)(m+1)\rceil$-th order statistic, using the same $+\infty$ convention as in \eqref{eqn:split_interval}. The resulting interval is
\begin{equation}
\label{eqn:gscp}
\widehat C_\alpha^{\mathrm G}(\bx_0,\bs_0)
=
\widehat f(\bx_0)
+
\widehat\mu_{\mathcal F}(\bs_0)
\pm
\widehat q^{\,\mathrm G}_{1-\alpha}
\widehat\nu_{\mathcal F}(\bs_0)^{1/2}.
\end{equation}
GSCP only accounts for the predictive information in the fitting residuals. It does not utilize the spatial information the calibration residuals carry on their own. Hence, the resulting intervals are wider and less stable than they need to be. Though the width of the GSCP intervals adapts to the prediction uncertainty, it applies the same rule to every calibration point and to the target. Therefore, it has the same coverage guarantee as naive split conformal, i.e., it is valid when the sampling preserves exchangeability.

\subsubsection{When geometry breaks exchangeability}
\label{sec:prob_def_nonexch}
There are two separate ways in which exchangeability can fail. First, the target score may have a different marginal distribution from the calibration scores. This might happen under spatial extrapolation or if the locations of the target and calibration are in different regions with distinct residual behaviour \citep{sampson1992}. Second, even with identical marginal distributions, the calibration scores may be more strongly or weakly related to one another than to the target score.

\citet{mao2024,guan2023,jiang2026} propose to address these by giving greater weight to calibration observations near the target. This makes the calibration scores more representative of the target score. In this article, we use a split form of the localized GSCP procedure proposed by \citet{mao2024}, which we call LSCP. A bandwidth parameter $h$ controls the degree of localization. Under strong localization there may be too few calibration observations with a relevant weight, which leads to an unbounded interval. For larger $h$, the weights become more uniform and LSCP approaches GSCP.

GSCP and LSCP both ignore the predictive information in the calibration scores, hence their performance depends on the fitting--calibration split. We see this in the following example. We consider $100$ labelled observations in two well-separated clusters and place the target in one of them. The residuals follow a stationary Gaussian process with strong dependence within clusters and negligible dependence between them. We compare three splits: the calibration observations lie entirely in the target cluster, are divided equally between the clusters, or lie entirely in the distant cluster. We call these the nearby, mixed, and distant calibration designs. We compute the coverage, mean width, and width stability for naive split conformal, GSCP and LSCP under known mean and covariance in Figure~\ref{fig:twoball}. We measure stability using the coefficient-of-variation ratio (CVR). This measures how much larger the coefficient of variation of a method's interval width is than that of naive conformal on an i.i.d.\ version of the data. If the CVR is one, it means that the dependence does not add any extra variability. If it is higher, it means that it does.  The figure also shows the performance of the proposed whitened method introduced in Section~\ref{sec:method}. 

\begin{figure}[t]
\begin{singlespace}
\centering
\includegraphics[width=0.98\textwidth]{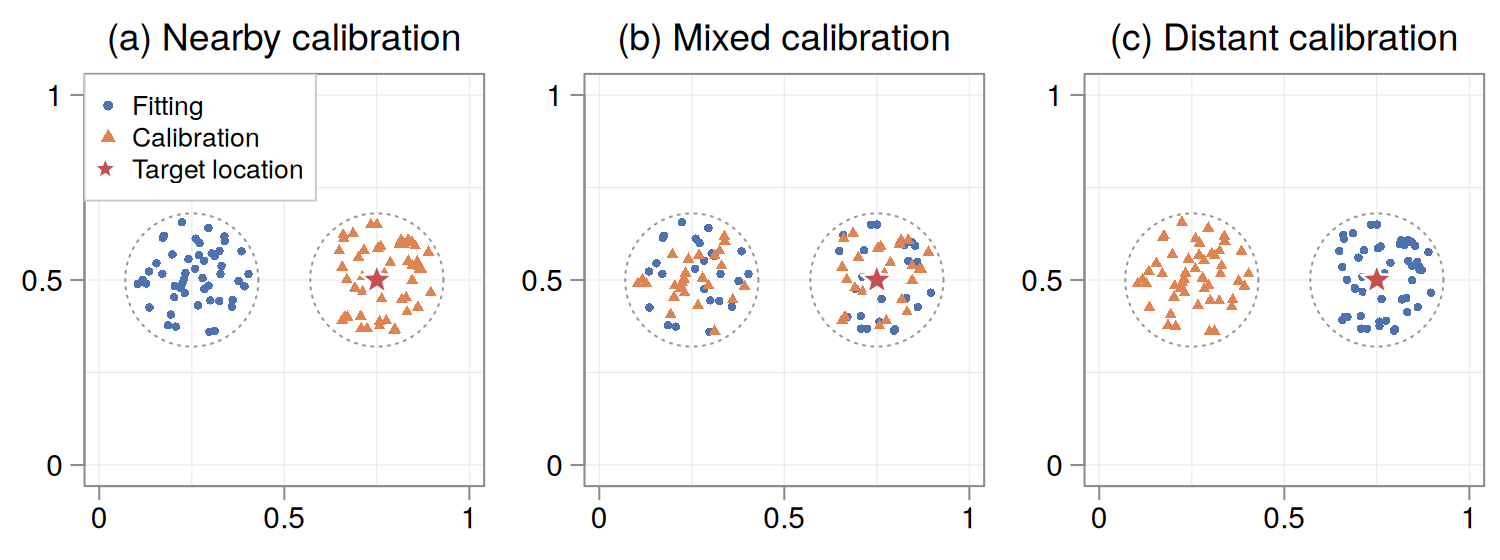}
\small
\setlength{\tabcolsep}{4pt}
\begin{tabular}{lccc}
\toprule
Method & Nearby & Mixed & Distant\\
\midrule
Naive
& $0.911$; $3.32$ $(2.17)$
& $0.908$; $3.41$ $(1.89)$
& $0.858$; $3.30$ $(2.16)$\\
GSCP
& $0.910$; $3.30$ $(2.16)$
& $0.904$; $2.10$ $(1.10)$
& $0.855$; $1.74$ $(2.14)$\\
LSCP, $h=0.05$
& $1.000$; $\infty$
& $1.000$; $\infty$
& $1.000$; $\infty$\\
LSCP, $h=0.20$
& $0.926$; $3.44$ $(2.13)$
& $0.926$; $2.32$ $(1.50)$
& $1.000$; $\infty$\\
LSCP, $h=0.40$
& $0.926$; $3.43$ $(2.13)$
& $0.921$; $2.22$ $(1.17)$
& $0.886$; $1.90$ $(2.07)$\\
LSCP, $h=1$
& $0.926$; $3.43$ $(2.13)$
& $0.919$; $2.20$ $(1.12)$
& $0.873$; $1.83$ $(2.09)$\\
LSCP, $h=2$
& $0.926$; $3.43$ $(2.13)$
& $0.919$; $2.20$ $(1.12)$
& $0.872$; $1.82$ $(2.09)$\\
\midrule
Proposed (Whitened)
& $0.904$; $1.87$ $(1.00)$
& $0.902$; $1.87$ $(1.01)$
& $0.900$; $1.87$ $(1.00)$\\
\bottomrule
\end{tabular}
\normalsize
\caption{The top panel shows the three fitting--calibration splits on fixed data in the two-cluster design. The table below reports sensitivity of naive split conformal, GSCP, LSCP (under different bandwidths $h$), and the proposed approach to the fitting--calibration split. Each entry reports coverage; mean width (CVR) over $20{,}000$ datasets for nominal $90\%$ intervals. Under strong localization, LSCP can produce infinite intervals.}
\label{fig:twoball}
\end{singlespace}
\end{figure}
\vspace{-0.4cm}
Naive split conformal does not use fitting or calibration residuals for prediction, hence its width is not affected by the split and is comparable across the three splits. Its CVR remains near two because it does not account for the correlation among the calibration data. Under nearby calibration, fitting and target data lie in different clusters and are nearly independent. Hence conditioning on fitting data does not give GSCP any additional advantage and it performs similarly to naive conformal. For mixed calibration, since fitting data share a cluster with both the calibration and target data, conditioning on them can explain some of the predictable spatial variation in both the target and the calibration data, leading to a much narrower interval with a CVR that is closer to one. In distant calibration the fitting data share a cluster with the target, hence GSCP can explain a lot more of the predictable variation in the target residual, making the interval the narrowest. However, the fitting data are nearly independent of the calibration data, and cannot explain the correlation in the calibration residuals, which reintroduces instability in the width and the CVR jumps to near two.

As far as coverage is concerned, naive conformal and GSCP attain near-nominal coverage under both nearby and mixed calibration, where all or part of the calibration data lie in the same cluster as the target. Under distant calibration, the calibration data are in a different cluster from the target. The calibration scores are then strongly related to one another but nearly independent of the target score. This violates exchangeability, and both naive conformal and GSCP undercover.

To see the problem, Figure~\ref{fig:regimes} plots the rank of the target score among the calibration scores under distant calibration and compares it with nearby calibration. For GSCP the ranks are nearly uniform under nearby calibration but bimodal under distant calibration. The bimodal case places excess probability in the noncoverage region, so GSCP undercovers.

\begin{figure}[t]
\centering
\includegraphics[width=0.94\textwidth]{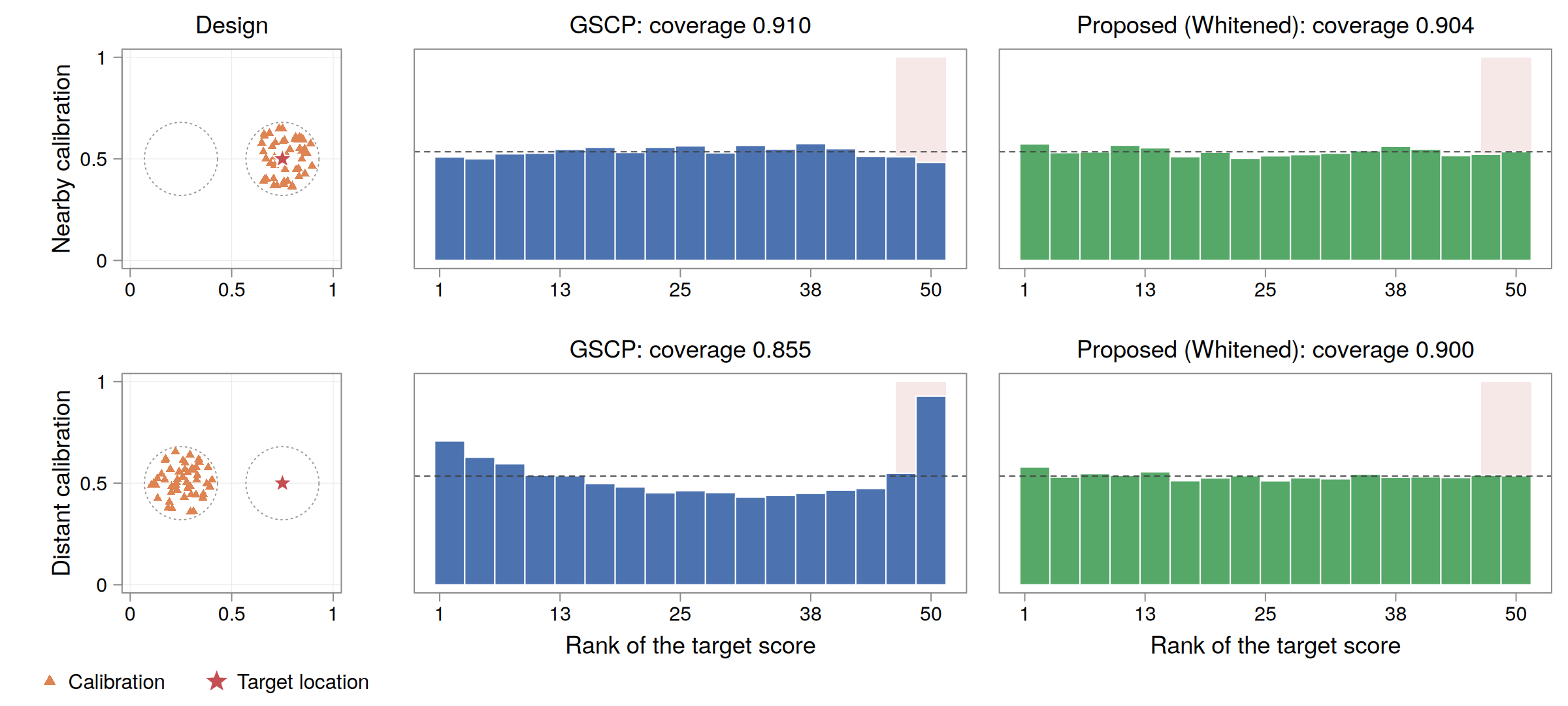}
\caption{Distribution of rank of target score in the two-cluster example. The top row uses nearby calibration, and the bottom row uses distant calibration. The left column shows the calibration and target locations; the middle and right columns show the distribution of the rank of the target score among calibration scores under GSCP and the proposed whitened method. Dashed lines mark the expected bin height under a uniform rank distribution, and shaded regions contain ranks producing noncoverage.}
\label{fig:regimes}
\end{figure}
\vspace{-0.4cm}
\citet{mao2024} propose to bypass this problem via localization in LSCP.  However, the performance of LSCP depends on the bandwidth that controls the degree of localization. In Figure~\ref{fig:twoball}, very strong localization (small $h$) produces an unbounded interval. An intermediate $h$ increases coverage, but only with a wider interval, and overcovers where GSCP was already calibrated. As $h$ increases further, the interval narrows toward GSCP and still undercovers under distant calibration. The $h$ that balances coverage and width is unknown in practice. 
The example shows that conditioning on the fitting residuals alone leads to an interval that is wider than necessary and less stable, and can cause undercoverage or unbounded intervals. In the next section we propose our method, which addresses all of these together.

\section{Method}
\label{sec:method}
The previous section showed that the crux of the issue is that the predictive information in the calibration residuals goes unused. However, predicting a calibration residual from the entire calibration set makes it predict itself. Its predictive variance is then zero and its score degenerate, so it can no longer be compared with the target score.

We propose to bypass this by conditioning on the calibration residuals sequentially. Fixing an order that does not depend on the responses, we predict each calibration residual from the fitting residuals and the calibration residuals before it. We subtract this prediction and divide by its predictive standard deviation. We place the target last and predict it from both the fitting and calibration residuals. We formalize the method below. 

\subsection{Sequential prediction and standardization}
\label{sec:method_sequential}
Let $\mathcal F$ and $\mathcal C$ be the fitting and calibration sets of sizes $n_{\mathcal F}$ and $m$, respectively. We fit the mean model $\widehat f$ on $\mathcal F$, and compute out-of-bag or cross-fitted residuals, $\br_{\mathcal F}$. For $i\in\mathcal C$, let $r_i=y_i-\widehat f(\bx_i)$, and $r_0=y_0-\widehat f(\bx_0)$ be the calibration and target residuals, respectively. Here $i = 1, 2, \ldots, m$ indexes any response-independent order, and we write $\br_{\mathcal C}=(r_1,\ldots,r_m)^\top$. Next, we stack the residuals as follows:
\begin{equation}
\label{eq:stacked_r}
\br_+
=
\bigl(
\br_{\mathcal F}^{\top},
\br_{\mathcal C}^{\top},
r_{m+1}
\bigr)^{\top}\in\R^N,
\qquad r_{m+1}=r_0,\quad N=n_{\mathcal F}+m+1.
\end{equation}
Let $\bSigma_+^{\mathrm w}$ be a positive-definite working covariance matrix for $\br_+$. For $A,B\subseteq\{1,\ldots,N\}$, write $\br_{+,A}$ and $\bSigma_{A,B}^{\mathrm w}=(\bSigma_+^{\mathrm w})_{A,B}$ for the corresponding subvector and submatrix. For each $i \in \{1, 2,\ldots, m+1 \}$, let $k_i=n_{\mathcal F}+i$ and $I_i=\{1,\ldots,k_i-1\}$. The standardized score of $r_i$ is
\begin{equation}
\label{eq:innovation}
e_i=\frac{r_i-\widehat\mu_i}{\widehat\nu_i^{1/2}},
\end{equation}
where, for all $i \in \{1, 2,\ldots, m+1 \}$,
\begin{equation}
\label{eq:seq_musd}
\widehat\mu_i=\bSigma_{k_i,I_i}^{\mathrm w}\bigl(\bSigma_{I_i,I_i}^{\mathrm w}\bigr)^{-1}\br_{+,I_i},
\qquad
\widehat\nu_i=\bSigma_{k_i,k_i}^{\mathrm w}-\bSigma_{k_i,I_i}^{\mathrm w}\bigl(\bSigma_{I_i,I_i}^{\mathrm w}\bigr)^{-1}\bSigma_{I_i,k_i}^{\mathrm w}.
\end{equation}

\subsection{Equivalence to triangular whitening}
\label{sec:method_op} 
Write the Cholesky factorization of the working covariance as $\bSigma_+^{\mathrm w}=\bL\bL^\top$, where $\bL=(\ell_{ij})_{i,j=1}^N$ is lower triangular with positive diagonal entries. Next, we connect the sequential construction to a single matrix transformation.

\begin{lemma}[Sequential form of triangular whitening]
\label{lem:sequential_whitening}
For every $i=1,\ldots,m+1$,
\begin{equation}
\label{eq:sequential_whitening}
\bigl(\bL^{-1}\br_+\bigr)_{n_{\mathcal F}+i}
=
e_i.
\end{equation}
\end{lemma}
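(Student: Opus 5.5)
We prove the stronger claim that for every $k=1,\ldots,N$,
\[
\bigl(\bL^{-1}\br_+\bigr)_k=\frac{r_{+,k}-\bSigma^{\mathrm w}_{k,I}(\bSigma^{\mathrm w}_{I,I})^{-1}\br_{+,I}}{\bigl(\bSigma^{\mathrm w}_{k,k}-\bSigma^{\mathrm w}_{k,I}(\bSigma^{\mathrm w}_{I,I})^{-1}\bSigma^{\mathrm w}_{I,k}\bigr)^{1/2}},
\qquad I=\{1,\ldots,k-1\}.
\]
For $k=k_i=n_{\mathcal F}+i$ this is exactly $e_i$ in \eqref{eq:innovation}--\eqref{eq:seq_musd}. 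The argument is pure linear algebra on the block structure of the Cholesky factor, so no distributional assumption enters.

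Fix $k$ and partition the leading $k\times k$ block. Because $\bL$ is lower triangular, the leading $k\times k$ block of $\bSigma_+^{\mathrm w}=\bL\bL^\top$ equals $\bL_{[k]}\bL_{[k]}^\top$, where $\bL_{[k]}$ is the leading $k\times k$ block of $\bL$. Write
\[
\bL_{[k]}=\begin{pmatrix}\bL_{I,I} & \bm 0\\ \bell^\top & \ell_{kk}\end{pmatrix}.
\]
Comparing blocks gives three identities:
\begin{itemize}
\item $\bSigma^{\mathrm w}_{I,I}=\bL_{I,I}\bL_{I,I}^\top$;
\item $\bSigma^{\mathrm w}_{k,I}=\bell^\top\bL_{I,I}^\top$;
\item $\bSigma^{\mathrm w}_{k,k}=\bell^\top\bell+\ell_{kk}^2$.
\end{itemize}

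Substituting these into the formulas gives
\[
\bSigma^{\mathrm w}_{k,I}(\bSigma^{\mathrm w}_{I,I})^{-1}=\bell^\top\bL_{I,I}^{-1}.
\]
Hence the conditional variance is $\bell^\top\bell+\ell_{kk}^2-\bell^\top\bell=\ell_{kk}^2$. Since $\ell_{kk}>0$, its square root is $\ell_{kk}$. The conditional mean is $\bell^\top\bL_{I,I}^{-1}\br_{+,I}$.

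Now let $\bm z=\bL^{-1}\br_+$, so that $\bL\bm z=\br_+$. By lower triangularity, $\bL_{I,I}\bm z_I=\br_{+,I}$, so $\bm z_I=\bL_{I,I}^{-1}\br_{+,I}$. The $k$th row then gives
\[
\bell^\top\bm z_I+\ell_{kk}z_k=r_{+,k},
\qquad\text{so}\qquad
z_k=\frac{r_{+,k}-\bell^\top\bL_{I,I}^{-1}\br_{+,I}}{\ell_{kk}},
\]
which is the claimed expression.

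The key observation is that the leading block of a Cholesky factor is the Cholesky factor of the leading block. Everything else is direct substitution. The only care needed is to check that $I_i$ as defined in the paper is the full prefix $\{1,\ldots,k_i-1\}$, and that positive-definiteness of $\bSigma_+^{\mathrm w}$ makes $\bSigma^{\mathrm w}_{I,I}$ invertible.
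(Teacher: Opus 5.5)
Your proof is correct and matches the paper's argument essentially step for step: both partition the leading $k_i\times k_i$ block of $\bL$ and read off $\bSigma^{\mathrm w}_{I_i,I_i}=\bL_{I_i,I_i}\bL_{I_i,I_i}^\top$, $\bSigma^{\mathrm w}_{k_i,I_i}=\bell^\top\bL_{I_i,I_i}^\top$ and $\bSigma^{\mathrm w}_{k_i,k_i}=\bell^\top\bell+\ell_{k_ik_i}^2$. Both then substitute to get $\widehat\mu_i=\bell^\top\bL_{I_i,I_i}^{-1}\br_{+,I_i}$ and $\widehat\nu_i=\ell_{k_ik_i}^2$, and finish by forward substitution. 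Stating the identity for every $k=1,\ldots,N$ instead of only $k=k_i$ is a harmless generalization of the same computation.
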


The proof is given in Supplementary Material Section~\ref{supp:sec_whitening}. This result shows that the  sequential prediction and standardization are equivalent to the triangular whitening of the ordered residual vector. 

\subsection{The whitened conformal interval}
\label{sec:method_interval}
We now construct the whitened conformal interval.

\begin{theorem}[Finite-sample validity of the whitened interval] \label{thm:whitened_interval} Fix $\alpha\in(0,1)$. Suppose that $\br_+$ has a centred, absolutely continuous elliptical distribution with scatter matrix $\bSigma_+^{\mathrm w}$. Let $j_\alpha=\lceil(1-\alpha)(m+1)\rceil$ and let $\widehat q^{\,\mathrm W}_{1-\alpha}$ be the $j_\alpha$-th order statistic of $\{|e_1|,\ldots,|e_m|,+\infty\}$, and write $\widehat\mu_{\mathcal F\cup\mathcal C}(\bs_0)=\widehat\mu_{m+1}$ and $\widehat\nu_{\mathcal F\cup\mathcal C}(\bs_0)=\widehat\nu_{m+1}$ from \eqref{eq:seq_musd} at $i=m+1$. Then the whitened conformal interval
\begin{equation}
\label{eq:interval}
\widehat C_\alpha^{\mathrm W}(\bx_0,\bs_0)
=
\widehat f(\bx_0)
+
\widehat\mu_{\mathcal F\cup\mathcal C}(\bs_0)
\pm
\widehat q^{\,\mathrm W}_{1-\alpha}
\widehat\nu_{\mathcal F\cup\mathcal C}(\bs_0)^{1/2}
\end{equation}
satisfies
$\Prob\{y_0\in\widehat C_\alpha^{\mathrm W}(\bx_0,\bs_0)\}
=\lceil(1-\alpha)(m+1)\rceil/(m+1)\geq1-\alpha$.
\end{theorem}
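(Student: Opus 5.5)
By Lemma~\ref{lem:sequential_whitening}, the scores $e_1,\ldots,e_{m+1}$ are the last $m+1$ coordinates of $\bm z=\bL^{-1}\br_+$. The plan is to show that these $m+1$ scores are exchangeable and almost surely distinct. Then apply the rank argument of Lemma~\ref{lem:split}, and check that the rank event is exactly the event that $y_0$ lies in the interval.

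\emph{Step 1: spherical symmetry.} Because $\br_+$ is centred elliptical with scatter $\bSigma_+^{\mathrm w}=\bL\bL^\top$, we can write $\br_+\overset{d}{=}\bL\bm u$, where $\bm u$ is spherically distributed: $\bm u\overset{d}{=}R\,\bm U$ with $\bm U$ uniform on the unit sphere in $\R^N$ and $R\ge0$ independent of $\bm U$. Hence $\bm z=\bL^{-1}\br_+$ is spherical, so $\bm Q\bm z\overset{d}{=}\bm z$ for every orthogonal $\bm Q$. Permutation matrices acting on the coordinates $n_{\mathcal F}+1,\ldots,N$ (and fixing the rest) are orthogonal. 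Therefore $(e_1,\ldots,e_{m+1})$ is exchangeable, and so is $(|e_1|,\ldots,|e_{m+1}|)$. Absolute continuity of $\br_+$ transfers to $\bm z$, since $\bL^{-1}$ is invertible linear. Thus ties $|e_i|=|e_j|$ for $i\ne j$ occur on a finite union of hyperplanes and have probability zero. The one point needing care is that elliptical exchangeability gives full permutation invariance of the joint law of the scores. It does not merely give identical marginals, and the orthogonal-invariance argument supplies exactly this. No independence among the $e_i$ is needed, which matters because non-Gaussian spherical laws have dependent coordinates.

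\emph{Step 2: coverage event equals a rank event.} Since $\widehat\nu_{m+1}>0$ by positive definiteness, $y_0\in\widehat C_\alpha^{\mathrm W}$ holds iff $|r_0-\widehat\mu_{m+1}|\le \widehat q^{\,\mathrm W}_{1-\alpha}\widehat\nu_{m+1}^{1/2}$. By \eqref{eq:innovation} this is iff $|e_{m+1}|\le \widehat q^{\,\mathrm W}_{1-\alpha}$, the $j_\alpha$-th order statistic of $\{|e_1|,\ldots,|e_m|,+\infty\}$. This holds iff the rank of $|e_{m+1}|$ among all $m+1$ scores is at most $j_\alpha$. The standard argument goes as follows. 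If $|e_{m+1}|$ has rank $\le j_\alpha$, then at most $j_\alpha-1$ calibration scores lie below it, so it does not exceed the $j_\alpha$-th order statistic of the augmented calibration list. Conversely, if its rank is $>j_\alpha$, at least $j_\alpha$ calibration scores are strictly smaller. When $j_\alpha=m+1$ the threshold is $+\infty$ and coverage is trivial, consistent with the formula.

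\emph{Step 3: conclusion.} By exchangeability and a.s.\ distinctness, the rank of $|e_{m+1}|$ is uniform on $\{1,\ldots,m+1\}$. Hence the coverage probability equals $j_\alpha/(m+1)=\lceil(1-\alpha)(m+1)\rceil/(m+1)\ge1-\alpha$, with equality as stated rather than merely a bound.
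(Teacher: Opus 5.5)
Your proposal is correct and follows the paper's proof essentially step for step. The representation $\br_+\overset{d}{=}R\bL\bU$ makes $\bL^{-1}\br_+$ spherical, so invariance under the orthogonal permutation matrices acting on the last $m+1$ coordinates, together with Lemma~\ref{lem:sequential_whitening}, gives exchangeability of $(e_1,\ldots,e_{m+1})$; absolute continuity then rules out ties, so the coverage event, which is a rank event, has probability exactly $j_\alpha/(m+1)$. Your explicit check of the rank--threshold equivalence with the adjoined $+\infty$ is slightly more detailed than the paper's version.
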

Refer to Supplementary Material Section~\ref{supp:sec_exact} for the proof. The result also holds when $\widehat f$ and $\bSigma^{\mathrm w}_+$ are estimated from the fitting data, provided that, given the fitting data, the calibration and target residuals have an absolutely continuous elliptical distribution with location equal to their working kriging predictors from $\br_{\mathcal F}$ and scatter matrix equal to the corresponding working conditional covariance.

\subsection{What whitening changes}
\label{sec:method_gain}
The two-cluster example in Section~\ref{sec:prob_def_nonexch} showed that the nonoptimality of GSCP and LSCP comes from conditioning on the fitting residuals alone. Whitening addresses this by conditioning on both the fitting and calibration residuals.
\paragraph{\emph{Restoring rank symmetry.}}
Under a correct working covariance and an elliptical residual law, accounting for the correlation within calibration residuals makes calibration and target scores exchangeable. Figure~\ref{fig:regimes} shows this. The GSCP ranks are bimodal under distant calibration, whereas they become nearly uniform under our proposed approach.

\paragraph{\emph{Reducing interval width.}}
Since the whitened interval uses more information than GSCP to predict the target errors, intuitively our interval should be narrower. Proposition~\ref{prop:width} makes this precise: the ratio of the whitened and GSCP interval widths is asymptotically equal to the ratio of their predictive standard deviations at the target. Since $\mathcal F\subseteq\mathcal F\cup\mathcal C$, conditioning on the calibration residuals cannot increase the prediction variance, therefore $\widehat\nu_{\mathcal F\cup\mathcal C}(\bs_0)\le\widehat\nu_{\mathcal F}(\bs_0)$. When calibration residuals contain information in addition to fitting data, this inequality is strict. Figure~\ref{fig:twoball} shows this in the two-cluster case.
\begin{proposition}[Asymptotic width ratio]
\label{prop:width}
Let $W^{\mathrm G}(\bs_0)$ and $W^{\mathrm W}(\bs_0)$ be the widths of the GSCP and whitened intervals
under the same working covariance. Suppose that, as $m\to\infty$, the empirical distributions of the absolute calibration scores of the two methods converge in probability to the same limit $G_0$. Suppose also that $G_0$ is continuous and strictly increasing at its $(1-\alpha)$ quantile $\zeta_{1-\alpha}>0$. Then
\begin{equation}
\label{eq:width_ratio}
\frac{W^{\mathrm W}(\bs_0)}{W^{\mathrm G}(\bs_0)}
=
\left\{
\frac{\widehat\nu_{\mathcal F\cup\mathcal C}(\bs_0)}{\widehat\nu_{\mathcal F}(\bs_0)}
\right\}^{1/2}
\{1+o_{\Prob}(1)\}.
\end{equation}
\end{proposition}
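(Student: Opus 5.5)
The widths are explicit: $W^{\mathrm G}(\bs_0)=2\widehat q^{\,\mathrm G}_{1-\alpha}\widehat\nu_{\mathcal F}(\bs_0)^{1/2}$ from \eqref{eqn:gscp} and $W^{\mathrm W}(\bs_0)=2\widehat q^{\,\mathrm W}_{1-\alpha}\widehat\nu_{\mathcal F\cup\mathcal C}(\bs_0)^{1/2}$ from \eqref{eq:interval}. The ratio therefore factorizes exactly as
\begin{equation*}
\frac{W^{\mathrm W}(\bs_0)}{W^{\mathrm G}(\bs_0)}
=
\left\{\frac{\widehat\nu_{\mathcal F\cup\mathcal C}(\bs_0)}{\widehat\nu_{\mathcal F}(\bs_0)}\right\}^{1/2}
\frac{\widehat q^{\,\mathrm W}_{1-\alpha}}{\widehat q^{\,\mathrm G}_{1-\alpha}}.
\end{equation*}
The variance ratio is deterministic, since the working covariance is held fixed. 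The claim thus reduces to showing $\widehat q^{\,\mathrm W}_{1-\alpha}/\widehat q^{\,\mathrm G}_{1-\alpha}\to 1$ in probability. Because the limit quantile $\zeta_{1-\alpha}$ is strictly positive, it suffices to prove that each empirical quantile converges in probability to $\zeta_{1-\alpha}$; the continuous mapping theorem then gives the ratio tending to one, which is the factor $1+o_{\Prob}(1)$.

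The key step is a standard quantile-consistency argument applied separately to each method. Let $\widehat G_m$ be the empirical distribution function of the $m$ absolute calibration scores of one method. The estimated quantile is the $j_\alpha$-th order statistic, with $j_\alpha=\lceil(1-\alpha)(m+1)\rceil$, so $j_\alpha/m\to1-\alpha$. Note also that $j_\alpha\le m$ for large $m$, so the $+\infty$ convention eventually plays no role and the quantile is finite.

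Fix $\epsilon>0$. Since $G_0$ is strictly increasing at $\zeta_{1-\alpha}$, we have
\begin{equation*}
G_0(\zeta_{1-\alpha}-\epsilon)<1-\alpha<G_0(\zeta_{1-\alpha}+\epsilon).
\end{equation*}
Let $\delta>0$ be the smaller of the two gaps. By the convergence hypothesis, applied at these two fixed points, with probability tending to one both
\begin{equation*}
\widehat G_m(\zeta_{1-\alpha}-\epsilon)<1-\alpha-\delta/2
\qquad\text{and}\qquad
\widehat G_m(\zeta_{1-\alpha}+\epsilon)>1-\alpha+\delta/2.
\end{equation*}
On this event, for large $m$, fewer than $j_\alpha$ scores lie at or below $\zeta_{1-\alpha}-\epsilon$, and at least $j_\alpha$ scores lie at or below $\zeta_{1-\alpha}+\epsilon$. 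Hence the $j_\alpha$-th order statistic lies in $(\zeta_{1-\alpha}-\epsilon,\zeta_{1-\alpha}+\epsilon]$. This is the step that converts convergence in probability at fixed points into consistency of the order statistic.

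The main obstacle is how to read the hypothesis that the empirical distributions converge in probability to $G_0$. I would interpret it as pointwise convergence $\widehat G_m(t)\to G_0(t)$ in probability at continuity points of $G_0$. Continuity of $G_0$ near $\zeta_{1-\alpha}$ means $\zeta_{1-\alpha}\pm\epsilon$ can be chosen as continuity points, and only those two points are used in the argument above. Pointwise convergence at these points is enough, so no Glivenko–Cantelli-type uniform convergence is needed. This matters because the calibration scores are dependent under the spatial model, and uniform convergence would be harder to justify.

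Finally, combine the two results: $\widehat q^{\,\mathrm W}_{1-\alpha}/\widehat q^{\,\mathrm G}_{1-\alpha}=1+o_{\Prob}(1)$. Substituting this into the factorization gives \eqref{eq:width_ratio}.
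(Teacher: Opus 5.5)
Your proposal is correct and follows the paper's proof: you use the same exact factorization into the quantile ratio times $\{\widehat\nu_{\mathcal F\cup\mathcal C}(\bs_0)/\widehat\nu_{\mathcal F}(\bs_0)\}^{1/2}$, followed by consistency of both conformal quantiles for $\zeta_{1-\alpha}>0$, and your explicit order-statistic argument fills in the step the paper asserts in one line. One small correction: $\zeta_{1-\alpha}\pm\epsilon$ can be taken as continuity points because $G_0$ has at most countably many jumps, not because $G_0$ is continuous near $\zeta_{1-\alpha}$, which is not assumed.
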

The proof is given in Supplementary Material Section~\ref{supp:sec_eff}.

\paragraph{\emph{Stabilizing the calibration threshold.}}
Under a correct Gaussian working covariance, the sequential conditioning of whitening removes correlation within calibration residuals beyond what is removed by GSCP and LSCP. As a result, each score contributes separate information to the threshold. Empirically, this makes our method's stability comparable to the independent-data case. Figure~\ref{fig:twoball} demonstrates this in the two-cluster example.

\subsection{Implementation}
\label{sec:method_impl}
\begin{algorithm}[!t]
\caption{Whitened spatial conformal prediction}
\label{alg}
\begin{algorithmic}[1]
\Require Labelled data $\mathcal D_n=\{(\bx_i,\bs_i,y_i)\}_{i=1}^n$, target $(\bx_0,\bs_0)$, fitting and calibration sets $(\mathcal F,\mathcal C)$, miscoverage level $\alpha$, mean estimator $\mathcal A$, working covariance model $C_\theta^{\mathrm w}$, covariance-parameter estimator $\mathcal E$, calibration-ordering rule $\mathcal O$, and routine $\mathcal B$ that returns the inverse-Cholesky factor of the fitted working covariance at an ordered set of locations

\State Fit $\widehat f\gets\mathcal A(\mathcal F)$

\State Compute fitting residuals $\br_{\mathcal F}$, using out-of-bag or cross-fitted predictions for flexible learners, and compute $r_i=y_i-\widehat f(\bx_i)$ for $i\in\mathcal C$

\State Estimate $\widehat\theta
\gets
\mathcal E\!\left(
\{(\bs_i,(\br_{\mathcal F})_i):i\in\mathcal F\};
C_\theta^{\mathrm w}
\right)$

\State Set $\pi_{\mathcal C}\gets\mathcal O(\{(\bx_i,\bs_i):i\in\mathcal C\})$ and form $\br_{\mathcal C}
=
\bigl(
r_{\pi_{\mathcal C}(1)},\ldots,r_{\pi_{\mathcal C}(m)}
\bigr)^\top$

\State Form the augmented residual vector $\br_+=\bigl(\br_{\mathcal F}^{\top},\br_{\mathcal C}^{\top},r_{m+1}\bigr)^{\top}$, where the final entry $r_{m+1}=y_0-\widehat f(\bx_0)$ is unknown

\State Construct the inverse-Cholesky factor of the fitted working covariance of $\br_+$ as $\bL^{-1}
\gets
\mathcal B\!\left(
C_{\widehat\theta}^{\mathrm w};
\bs_{\mathcal F},
\bs_{\pi_{\mathcal C}},
\bs_0
\right)$

\State Compute $(\bL^{-1})_{1:n,1:n}\br_{+,1:n}$ and retain its final $m$ coordinates as $e_1,\ldots,e_m$

\State Set $\widehat q^{\,\mathrm W}_{1-\alpha}$ to the $\lceil(1-\alpha)(m+1)\rceil$-th order statistic of $\{|e_1|,\ldots,|e_m|,+\infty\}$

\State Let $(\bm a^\top,d)$ be the final row of $\bL^{-1}$ and compute
\[
\widehat\mu_{\mathcal F\cup\mathcal C}(\bs_0)
=
-\frac{\bm a^\top\br_{+,1:n}}{d},
\qquad
\widehat\nu_{\mathcal F\cup\mathcal C}(\bs_0)
=
d^{-2}
\]

\State \Return The whitened conformal interval
\[
\widehat C_\alpha^{\mathrm W}(\bx_0,\bs_0)
=
\widehat f(\bx_0)
+
\widehat\mu_{\mathcal F\cup\mathcal C}(\bs_0)
\pm
\widehat q^{\,\mathrm W}_{1-\alpha}
\widehat\nu_{\mathcal F\cup\mathcal C}(\bs_0)^{1/2}.
\]
\end{algorithmic}
\end{algorithm}
\vspace{-0.2cm}

Algorithm~\ref{alg} summarizes the complete construction. The construction allows flexibility in mean estimation, construction of the inverse-Cholesky factor of the working covariance, and calibration ordering. We describe the choices used in this article below.
\paragraph{\emph{Mean estimation.}}
This approach works with any mean estimator. In-sample fitting residuals from flexible learners are artificially small. This distorts covariance
estimation and whitening. Therefore, we build $\br_{\mathcal F}$ from out-of-bag or cross-fitted predictions. Here we use a random forest with out-of-bag residuals.

\paragraph{\emph{Working covariance and factorization.}}
This approach works with any positive-definite covariance model and parameter estimator. We use an exponential covariance with a nugget throughout and
estimate its parameters with the Nearest Neighbor Gaussian Process (NNGP) approximation \citep{datta2016}, implemented in BRISC \citep{saha2018brisc}. We then use these to compute the dense working covariance matrix and its Cholesky factor. This costs cubic time and quadratic storage. For moderate to large samples, this can become computationally prohibitive. Hence we recommend using NNGP covariance there. In NNGP, each residual is conditioned on at most $K$ preceding neighbours, hence the inverse-Cholesky factor has at most $K+1$ nonzero entries per row and costs linear time and storage. The Supplementary Material Section~\ref{supp:impl_ours} shows that the two implementations give practically indistinguishable results.

\paragraph{\emph{Ordering.}}
The method works with any order that does not depend on the responses. We use the order supplied by the split, and the Supplementary Material Section~\ref{supp:impl_ours} shows that the results are insensitive to this choice.

\section{Theoretical guarantees}\label{sec:theory}
In this section, we demonstrate the asymptotic equality of the widths of the whitened interval and the shortest conditionally valid interval under oracle conditions. Next, we consider what happens under misspecification. First, we derive a global bound on the resulting coverage error and then develop a local diagnostic that detects regions at risk of undercoverage without knowing the true data-generating process. Throughout this section, we treat the locations, the fitted mean, and the working covariance as fixed. 

\subsection{Efficiency}
\label{sec:opt}

\begin{theorem}[Asymptotic oracle efficiency]
\label{thm:eff}
Let $\mathcal D=\mathcal F\cup\mathcal C$. Suppose the working covariance is correct and the coordinates of $\be_+=\bL^{-1}\br_+$ are independent with a common distribution having mean zero, unit variance, and a continuous density that is symmetric about zero and strictly decreasing on $(0,\infty)$. Suppose also that the density of $|e|$ is positive at its $(1-\alpha)$ quantile. Let $W^{\mathrm o}(\bs_0)$ denote the width of the shortest interval, measurable with respect to $\br_{\mathcal D}$, having conditional coverage at least $1-\alpha$ given $\br_{\mathcal D}$ when the true joint law is known. If $W^{\mathrm W}(\bs_0)$ denotes the width of the whitened interval in \eqref{eq:interval}, then, as $m\to\infty$,
\begin{equation}
\label{eq:eff}
\frac{W^{\mathrm W}(\bs_0)} {W^{\mathrm o}(\bs_0)} = 1+O_{\Prob}(m^{-1/2}).
\end{equation}
\end{theorem}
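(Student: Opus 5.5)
Our plan is to reduce both widths to multiples of the same conditional standard deviation $\widehat\nu_{\mathcal D}(\bs_0)^{1/2}$ and then compare the multipliers. By Lemma~\ref{lem:sequential_whitening}, the last coordinate of $\be_+$ is
\[
e_{m+1}=\{r_0-\widehat\mu_{\mathcal D}(\bs_0)\}/\widehat\nu_{\mathcal D}(\bs_0)^{1/2}.
\]
Because $\bL$ is lower triangular, $\br_{\mathcal D}$ is an invertible function of the first $N-1$ coordinates of $\be_+$. Under the independence assumption, $e_{m+1}$ is therefore independent of $\br_{\mathcal D}$. Write $F$ for the common law of the coordinates. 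Conditionally on $\br_{\mathcal D}$, the residual $r_0$ then has the law of $\widehat\mu_{\mathcal D}(\bs_0)+\widehat\nu_{\mathcal D}(\bs_0)^{1/2}e$ with $e\sim F$.

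\emph{The oracle width.} We first identify $W^{\mathrm o}$. The conditional law of $r_0$ is known and has a density proportional to $f_e\bigl((r-\widehat\mu)/\widehat\nu^{1/2}\bigr)$. Since $f_e$ is symmetric and strictly decreasing on $(0,\infty)$, this density is symmetric and unimodal about $\widehat\mu_{\mathcal D}(\bs_0)$. The shortest set with conditional coverage $1-\alpha$ is then a highest-density set: by the standard Neyman--Pearson / bathtub argument, it is the symmetric interval $\widehat\mu\pm \zeta_{1-\alpha}\widehat\nu^{1/2}$. Here $\zeta_{1-\alpha}$ is the $(1-\alpha)$ quantile of $|e|$, and continuity of $F$ gives exact coverage. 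Any shorter measurable interval must have conditional coverage below $1-\alpha$, by strict monotonicity of the density. Hence
\[
W^{\mathrm o}(\bs_0)=2\zeta_{1-\alpha}\widehat\nu_{\mathcal D}(\bs_0)^{1/2}.
\]
The whitened interval \eqref{eq:interval} has width $W^{\mathrm W}(\bs_0)=2\widehat q^{\,\mathrm W}_{1-\alpha}\widehat\nu_{\mathcal D}(\bs_0)^{1/2}$, so the ratio is exactly $\widehat q^{\,\mathrm W}_{1-\alpha}/\zeta_{1-\alpha}$. The $\widehat\nu$ factor cancels, because the working covariance is correct and both intervals condition on the same set.

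\emph{The quantile rate.} It remains to show $\widehat q^{\,\mathrm W}_{1-\alpha}-\zeta_{1-\alpha}=O_{\Prob}(m^{-1/2})$. By assumption, $|e_1|,\ldots,|e_m|$ are i.i.d.\ with the law $G$ of $|e|$, and $G$ has positive density $g$ at $\zeta_{1-\alpha}$. Once $m$ is large enough that $j_\alpha\le m$, the threshold $\widehat q^{\,\mathrm W}_{1-\alpha}$ is the $j_\alpha$-th order statistic of the $|e_i|$. Since $j_\alpha/m=(1-\alpha)+O(m^{-1})$, this is a sample quantile at a level within $O(1/m)$ of $1-\alpha$. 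The classical Bahadur representation, or just the DKW inequality combined with the positive density, gives
\[
\widehat q^{\,\mathrm W}_{1-\alpha}=\zeta_{1-\alpha}+O_{\Prob}(m^{-1/2}).
\]
Concretely, for a constant $c$ the event $\{\widehat q>\zeta+c m^{-1/2}\}$ requires fewer than $j_\alpha$ scores below $\zeta+cm^{-1/2}$. The number of scores below that point is binomial with mean $m\{1-\alpha+g(\zeta)cm^{-1/2}+o(m^{-1/2})\}$, and Chebyshev's inequality makes this event's probability small for large $c$. The lower tail is handled symmetrically. We need $\zeta_{1-\alpha}>0$ to divide, which holds because $F$ is continuous and $1-\alpha>0$. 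This gives $W^{\mathrm W}/W^{\mathrm o}=1+O_{\Prob}(m^{-1/2})$.

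\emph{Main obstacle.} The delicate step is the oracle characterization. We must argue that over all $\br_{\mathcal D}$-measurable sets (intervals) with conditional coverage at least $1-\alpha$, the minimal length is attained by the symmetric interval. This needs the unimodality and symmetry of the conditional law, which follow from the independence of $e_{m+1}$ from the past. That independence is exactly where the correct-covariance and independent-innovations assumption, rather than mere uncorrelatedness, is used.
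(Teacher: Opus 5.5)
Your proposal is correct and follows essentially the same route as the paper. Both arguments identify the oracle interval as the symmetric interval of half-width proportional to $\ell_{NN}$, using independence of the final whitened coordinate from $\br_{\mathcal D}$ together with symmetry and strict unimodality; both cancel the common predictive scale and then show the empirical quantile of the i.i.d.\ scores $|e_1|,\ldots,|e_m|$ is within $O_{\Prob}(m^{-1/2})$ of the population quantile. The only difference is cosmetic: the paper cites asymptotic normality of central order statistics and handles the $O(m^{-1})$ shift in the level $j_\alpha/m$ separately, whereas your binomial--Chebyshev (or DKW) argument absorbs that shift directly.
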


The oracle and whitened intervals share the same predictive standard deviation but have different multipliers. The whitened interval uses the empirical quantile of the whitened calibration scores, whereas the oracle uses the population quantile of $|e|$. This empirical calibration quantile approaches the oracle quantile at the rate in \eqref{eq:eff} under the assumptions of the theorem. Supplementary Material Section~\ref{supp:sec_eff} contains the proof.

\subsection{Coverage under covariance misspecification}
\label{sec:theory_gap}

In reality, the true covariance and the covariance utilised for whitening are typically different. First, we use the difference between an exchangeable reference law and the actual law of the transformed calibration and target errors to bound the resulting coverage error.

Let $\be=(e_1,\ldots,e_m,e_{m+1})^\top$ be the vector of transformed calibration and target errors. Let $Q_{\be}$ represent a reference law under which the absolute coordinates of $\be$ are exchangeable and almost surely distinct, and let $P_{\be}$ denote the actual law of $\be$.

\begin{theorem}[Coverage gap]
\label{thm:gap}
If $P_{\be}\ll Q_{\be}$, then
\begin{equation}
\label{eq:coverage_gap_kl}
\left| \Prob\{y_0\in\widehat C^{\mathrm W}_\alpha(\bx_0,\bs_0)\} -(1-\alpha) \right| \leq \frac{1}{m+1} + \sqrt{\frac{\KL(P_{\be}\,\|\,Q_{\be})}{2}}.
\end{equation}
\end{theorem}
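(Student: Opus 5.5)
The plan is to write the coverage event as an event determined by $\be$ alone, get exact coverage bounds under the exchangeable reference law $Q_{\be}$ from Lemma~\ref{lem:split}, and then move from $Q_{\be}$ to $P_{\be}$ using total variation and Pinsker's inequality.

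\emph{Step 1: coverage is an event in $\be$.} By Lemma~\ref{lem:sequential_whitening}, $e_{m+1}=(r_0-\widehat\mu_{\mathcal F\cup\mathcal C}(\bs_0))/\widehat\nu_{\mathcal F\cup\mathcal C}(\bs_0)^{1/2}$, with $\widehat\nu_{\mathcal F\cup\mathcal C}(\bs_0)>0$ because $\bSigma^{\mathrm w}_+$ is positive definite. Hence $y_0\in\widehat C^{\mathrm W}_\alpha(\bx_0,\bs_0)$ if and only if $|e_{m+1}|\le\widehat q^{\,\mathrm W}_{1-\alpha}$. The threshold $\widehat q^{\,\mathrm W}_{1-\alpha}$ is the $j_\alpha$-th order statistic of $\{|e_1|,\ldots,|e_m|,+\infty\}$, so it is also a function of $\be$. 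The coverage event is therefore $\{\be\in A\}$ for the fixed Borel set
\[
A=\bigl\{\bm v\in\R^{m+1}:|v_{m+1}|\le(\text{$j_\alpha$-th order statistic of }|v_1|,\ldots,|v_m|,+\infty)\bigr\}.
\]

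\emph{Step 2: bounds under $Q_{\be}$.} Under $Q_{\be}$ the absolute coordinates are exchangeable and almost surely distinct. Lemma~\ref{lem:split}, applied to the scores $S_i=|e_i|$, gives $1-\alpha\le Q_{\be}(A)\le 1-\alpha+1/(m+1)$. In particular $|Q_{\be}(A)-(1-\alpha)|\le 1/(m+1)$. Alternatively, the usual uniform-rank argument gives $Q_{\be}(A)=j_\alpha/(m+1)$ directly.

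\emph{Step 3: transfer to $P_{\be}$.} By the triangle inequality,
\[
|P_{\be}(A)-(1-\alpha)|\le |P_{\be}(A)-Q_{\be}(A)|+|Q_{\be}(A)-(1-\alpha)|.
\]
The first term is at most $\sup_B|P_{\be}(B)-Q_{\be}(B)|=\mathrm{TV}(P_{\be},Q_{\be})$. Because $P_{\be}\ll Q_{\be}$, the divergence $\KL(P_{\be}\,\|\,Q_{\be})$ is well defined, possibly infinite, in which case the bound holds trivially. Pinsker's inequality then gives $\mathrm{TV}\le\sqrt{\KL/2}$. Combining this with Step 2 yields \eqref{eq:coverage_gap_kl}.

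\emph{Main obstacle.} The only delicate point is Step 1. The probability in the statement is taken over the responses, so I must check that the coverage indicator depends on the data only through $\be$, with no extra randomness. This holds because the locations, the fitted mean and $\bL$ are treated as fixed. If ties occur under $P_{\be}$ (as opposed to $Q_{\be}$), nothing breaks: $A$ is defined with $\le$, and the transfer in Step 3 needs only measurability of $A$.
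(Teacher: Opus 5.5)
Your proposal is correct and follows essentially the same route as the paper. Both proofs write coverage as the event $\{|e_{m+1}|\le\widehat q^{\,\mathrm W}_{1-\alpha}\}$, which depends only on $\be$, use the exchangeable-rank argument to show that its $Q_{\be}$-probability is within $1/(m+1)$ of $1-\alpha$, and then transfer to $P_{\be}$ through the total variation distance and Pinsker's inequality.
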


Exchangeability places the coverage probability within $1/(m+1)$ of its nominal level under $Q_{\be}$. When we replace $Q_{\be}$ with the real law $P_{\be}$, the probability changes by at most the total variation distance between the two laws. Applying Pinsker's inequality then gives us \eqref{eq:coverage_gap_kl}. Supplementary Material Section~\ref{supp:sec_gap} contains the proof.

Now assume that the true and working residual laws have positive-definite covariance matrices $\Sig_+$ and $\bSigma^{\mathrm w}_+=\bL\bL^\top$, respectively, and are centred Gaussian. Define $\Omega=\bigl(\bL^{-1}\Sig_+\bL^{-\top}\bigr)_{J,J}$, the actual covariance matrix of the transformed calibration and target errors, where $J=\{n_{\mathcal F}+1,\ldots,N\}$. Their covariance matrix under the working Gaussian reference is $I_{m+1}$. If $\rho_1,\ldots,\rho_{m+1}$ are $\Omega$'s eigenvalues, then $\KL\{N(\bm 0,\Omega)\,\|\,N(\bm 0,I_{m+1})\}=\frac12\sum_{j}(\rho_j-1-\log\rho_j)$, and substituting into \eqref{eq:coverage_gap_kl} yields
\begin{equation}
\label{eq:coverage_gap_gaussian}
\left| \Prob\{y_0\in\widehat C^{\mathrm W}_\alpha(\bx_0,\bs_0)\} -(1-\alpha) \right| \leq \frac{1}{m+1} + \frac12 \left\{ \sum_{j=1}^{m+1} \bigl(\rho_j-1-\log\rho_j\bigr) \right\}^{1/2}.
\end{equation}

\subsection{A local coverage diagnostic}
\label{sec:theory_loc}
At a given target, \eqref{eq:coverage_gap_gaussian} does not give us information about whether the coverage is higher or lower than nominal. This is determined by the ratio of the scale of the transformed target error to that of the calibration scores.

\begin{theorem}[Relative-scale characterization]
\label{thm:loc}
Suppose the residual process is centred and Gaussian. Write $z_0=e_{m+1}$, let $\kappa_i^2=\E(e_i^2)$
for $i=1,\ldots,m$ and $\kappa(\bs_0)^2=\E(z_0^2)$, and let $G$ be the standard half-normal
distribution function with $\xi_{1-\alpha}=G^{-1}(1-\alpha)$. Define
\begin{equation}
\label{eq:relative_scale}
\overline F_m(t)=\frac1m\sum_{i=1}^m G\!\left(\frac{t}{\kappa_i}\right),\qquad q_m^\star=\overline F_m^{-1}(1-\alpha),\qquad \krel(\bs_0)=\frac{\xi_{1-\alpha}\,\kappa(\bs_0)}{q_m^\star},
\end{equation}
and let $v_m=m^{-2}\sum_{i,j=1}^m\Corr(e_i,e_j)^2$. Suppose the calibration scales $\kappa_i$ are
bounded above and away from zero, $\kappa(\bs_0)$ is bounded away from zero, and $v_m\to0$. Then, as
$m\to\infty$,
\begin{equation}
\label{eq:loc}
\Prob\{y_0\in\widehat C^{\mathrm W}_\alpha(\bx_0,\bs_0)\} = G\!\left\{\frac{\xi_{1-\alpha}}{\krel(\bs_0)}\right\} +O\bigl(v_m^{1/3}\bigr).
\end{equation}
\end{theorem}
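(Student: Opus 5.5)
The plan is to reduce coverage to a statement about the empirical distribution of the calibration scores, and then control the randomness of that distribution with a variance bound.

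\emph{Setup.} Coverage is the event $|z_0|\le \widehat q^{\,\mathrm W}_{1-\alpha}$. By the definition of the order statistic, this is the event $\widehat F_m(|z_0|)\le j_\alpha/m$ up to ties, which occur with probability zero. Here $\widehat F_m(t)=m^{-1}\sum_{i=1}^m\bone\{|e_i|\le t\}$. Under the Gaussian assumption each $e_i\sim N(0,\kappa_i^2)$, so $\E\widehat F_m(t)=\overline F_m(t)$. The target statistic $z_0\sim N(0,\kappa(\bs_0)^2)$ gives $\Prob(|z_0|\le t)=G(t/\kappa(\bs_0))$. Evaluating this at $t=q_m^\star$ gives $G\{\xi_{1-\alpha}/\krel(\bs_0)\}$, the main term. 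So it suffices to show two things: $\widehat q^{\,\mathrm W}_{1-\alpha}$ concentrates at $q_m^\star$, and the dependence between $z_0$ and the calibration scores costs only $O(v_m^{1/3})$.

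\emph{Step 1: variance of the empirical CDF.} For fixed $t$, bound
\[
\Var\widehat F_m(t)=m^{-2}\sum_{i,j}\Cov\bigl(\bone\{|e_i|\le t\},\bone\{|e_j|\le t\}\bigr).
\]
For a bivariate Gaussian pair with correlation $\rho$, the covariance of even functions such as indicators of $|e|\le t$ is $O(\rho^2)$. This follows from the Hermite (Mehler) expansion: only even Hermite coefficients survive, so the series starts at $\rho^2$, and the sum of squared coefficients is bounded by the variance of an indicator. Hence $\Var\widehat F_m(t)\le C v_m$ uniformly in $t$. The same argument with $z_0$ included, using $\bone\{|z_0|\le t\}$ as one factor, bounds $\Cov(\widehat F_m(t),\bone\{|z_0|\le t\})$ by $m^{-1}\sum_i\Corr(e_i,z_0)^2$. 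By Cauchy--Schwarz this should be bounded by roughly $v_m^{1/2}$; I would treat the extended $(m+1)$-vector jointly, or include the target term in $v_m$.

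\emph{Step 2: quantile localisation and sandwiching.} Because the scales $\kappa_i$ are bounded above and below, $\overline F_m$ has derivative bounded away from zero near $q_m^\star$, uniformly in $m$. Fix $\delta>0$ and set $t_\pm=q_m^\star\pm\delta$. On the event $\{|\widehat F_m(t_\pm)-\overline F_m(t_\pm)|<c\delta\}$, we have $t_-\le\widehat q^{\,\mathrm W}_{1-\alpha}\le t_+$; the offset $j_\alpha/m-(1-\alpha)=O(1/m)$ is absorbed here. By Chebyshev and Step 1, this event fails with probability at most $Cv_m/\delta^2$. On the event, coverage is sandwiched between $\Prob(|z_0|\le t_-)$ and $\Prob(|z_0|\le t_+)$. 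Each differs from $G(q_m^\star/\kappa(\bs_0))$ by $O(\delta)$, since $\kappa(\bs_0)$ is bounded below and so the half-normal density is bounded.

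The sandwich ignores the dependence of $z_0$ on the event. To handle this, I would write the coverage as $\E\,\bone\{|z_0|\le\widehat q\}$ and split on the good event. The bad event is bounded by its probability $Cv_m/\delta^2$ regardless of what $z_0$ does, so the monotone sandwich still holds without independence. Then the total error is $O(\delta+v_m/\delta^2)$. Choosing $\delta=v_m^{1/3}$ gives $O(v_m^{1/3})$, which matches the stated rate; the cube root comes from exactly this Chebyshev trade-off.

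\emph{Main obstacle.} The hard part is the uniform $O(\rho^2)$ covariance bound for indicators of absolute Gaussians with heterogeneous scales, including possibly large correlations. For large $\rho$ the bound is trivial since covariances are at most one, so only the small-$\rho$ regime needs the Hermite argument. I would first try Mehler's formula applied to the standardized variables $e_i/\kappa_i$. Beyond that, I would check that the $O(1/m)$ rounding term is dominated by $v_m^{1/3}$, which holds because $v_m\ge 1/m$ from the diagonal terms.
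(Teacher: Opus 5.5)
Your proposal is correct and follows essentially the same route as the paper. The Mehler--Hermite bound $|\Cov|\le\rho^2/4$ for even indicators gives $\Var F_m(t)\le v_m/4$, Chebyshev plus the slope of $\overline F_m$ localises $\widehat q^{\,\mathrm W}_{1-\alpha}$ within $\delta$ of $q_m^\star$ (with $\delta\gtrsim 1/m$ absorbing the rounding), and the good-event sandwich with $\delta=v_m^{1/3}$ gives error $O(\delta+v_m/\delta^2)$. As you observe, the sandwich needs no control of the dependence between $z_0$ and the calibration scores, so you can drop the extra bound on $\Cov(\widehat F_m(t),\bone\{|z_0|\le t\})$ in Step 1, along with its unproved $v_m^{1/2}$ claim.
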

The result follows by replacing the conformal threshold with its large-sample value $q_m^\star$, which is the $(1-\alpha)$ quantile of the average distribution of the absolute calibration errors. The condition $v_m\to0$ requires enough calibration information to remain after whitening for the threshold to concentrate. It permits the target and calibration errors to be dependent, permits different calibration scales, and does not impose stationarity or ordering. Section~\ref{supp:sec_loc} of the Supplementary Material contains the proof. 

We call $\krel(\bs_0)$ in \eqref{eq:relative_scale} the relative predictive scale. Equation~\eqref{eq:loc} implies approximately nominal coverage when $\krel(\bs_0)=1$, conservative coverage when $\krel(\bs_0)<1$, and undercoverage when $\krel(\bs_0)>1$. The pointwise relative scale is generally unknown. But held-out responses from the same region offer a regional approximation. We use the root-mean-square transformed error among the regional held-out data to estimate $\kappa(\bs_0)$. We then compare it to the conformal threshold that the interval actually employs. This diagnostic identifies areas that may be undercovered or overcovered without requiring the true covariance matrix. For technical details of implementation, please refer to Supplementary Material Section~\ref{supp:impl_prosp}. 

\section{Simulation study}
\label{sec:sim}

The simulations address three questions. \textbf{(Q1)} Does whitening result in a narrower and more stable interval in an exchangeable regime when all methods retain nominal coverage? \textbf{(Q2)} Does the benefit of whitening increase when the calibration residuals provide more information about the target than the fitting residuals? \textbf{(Q3)} When the calibration and the target data come from different environments, does that lead to a coverage failure? If so, can the proposed prospective diagnostic identify this failure in advance?

\subsection{Methods and designs}

\label{sec:sim_setup}

The three questions use different sampling designs and allocations of labelled observations between the fitting and calibration sets. \emph{Regime A} addresses \textbf{Q1} by allocating them at random on a uniform sampling design, which guarantees that the design maintains exchangeability. 

One can use any response-independent allocation of the fitting and calibration sets, but random splitting does not cover all practically relevant instances. In many applications, the split is predetermined because a model has already been trained on past data and later observations are only available for calibration. Even when we can choose the split, a single fitting--calibration split may be used for multiple targets, and for some of them the points in the calibration set may be the most informative. As shown in the two-cluster example (Figure~\ref{fig:twoball}), the performance of GSCP and LSCP can depend strongly on this allocation, since they do not make use of the predictive information in the calibration data. Since whitening makes use of this information, under the oracle setup of the two-cluster example, it was shown to be largely unaffected by the choice of the split, while also producing narrower and more stable intervals. 

\emph{Regimes B and C} investigate the effect on performance when the sampling and splitting are nonrandom. \emph{Regime B} addresses \textbf{Q2} by allocating the points most informative about the target to the calibration set to varying degrees, which isolates the gain of utilizing the predictive information in the calibration residuals in whitening. \emph{Regime C} addresses \textbf{Q3} by creating dissimilarity between the calibration and target prediction environments through nonrandom spatial geometry, extrapolation, and nonstationarity. It evaluates both the resulting coverage and the ability of the proposed diagnostic to anticipate target-specific failures. 

We compare four methods: (1) naive split conformal prediction from \eqref{eqn:split_interval}; (2) the split version of GSCP in \eqref{eqn:gscp}, with the covariance estimator of the \texttt{scp} package accompanying \citet{mao2024}; (3) its localized version, LSCP, with bandwidth $h=0.40$; and (4) the proposed whitened interval from \eqref{eq:interval}, using Algorithm~\ref{alg}. 

For each simulation setup, we use $500$ independently generated data sets of $501$ spatial sites. For each site, we draw five i.i.d.\ $\operatorname{Unif}(0,1)$ covariates independently of location, and compute its response as $y(\bs)=\{10\sin(\pi x_1x_2)+20(x_3-0.5)^2+10x_4+5x_5\}/6+r(\bs)$,
where $r(\bs)$ is a mean-zero spatial field. Five sites are held out as prediction targets, leaving $496$ labelled observations, of which, unless stated otherwise, $330$ form the fitting set and $m=166$ the calibration set.

Unless stated otherwise, $r(\bs)$ is a Gaussian process with covariance
\begin{equation}
\label{eq:matern}
\operatorname{Cov}\{r(\bs),r(\bs')\}=\sigma^2\rho(\|\bs-\bs'\|)+\tau^2\mathbf 1\{\bs=\bs'\},
\qquad
\rho(d)=\frac{2^{1-\nu}}{\Gamma(\nu)}\bigl(\sqrt{2\nu}\,\phi d\bigr)^{\nu}K_\nu\bigl(\sqrt{2\nu}\,\phi d\bigr),
\end{equation}
with $\sigma^2=1$ and $\tau^2=0.1$, where $K_\nu$ is the modified Bessel function of the second kind. Here $\nu$ controls the smoothness of the process, with larger values giving smoother sample paths, and $\phi$ is an inverse-range parameter, with larger values giving faster decay. The case $\nu=1/2$ is the exponential correlation. We also use the Gaussian correlation $\rho(d)=e^{-(\phi d)^2}$.

Please see Supplementary Material Section~\ref{supp:sec_impl} for more details, including the regime-specific data-generating processes, the detailed implementation of every method, and the sensitivity of the comparison to the implementation choices.

\subsection{Results}
\paragraph{\emph{Regime A: efficiency and stability under random sampling.}}
Regime A considers nine different settings with stationary residual processes under random locations and random splits. The first six settings use an exponential covariance and vary the decay parameter $\phi$. Smaller values of $\phi$ produce stronger and longer-range dependence. The final three settings hold $\phi = 6$ fixed and increase the smoothness, which makes the field smoother at short distances. We compute coverage and mean width by averaging over random designs (i.e., the locations are redrawn in each replicate). We measure stability using the CVR defined in Section~\ref{sec:prob_def_nonexch}. To isolate the natural width variation of the methods from design effects, we compute the CVR on a fixed-design counterpart of each setting, in which one configuration of locations and split is held fixed across replicates.

\begin{table}[t]
\centering
\small
\setlength{\tabcolsep}{3pt}
\caption{Efficiency and stability under random sampling. Each entry reports coverage/mean width, with the CVR from fixed-design counterparts in parentheses. Nominal coverage is $0.90$.}
\label{tab:exch}
\begin{tabular}{lcccc}
\toprule
Design & Naive & GSCP & LSCP & Whitened\\
\midrule
Exponential, $\phi=0.5$
& $.899/2.19\ (2.33)$
& $.902/1.75\ (1.91)$
& $.909/1.80\ (1.89)$
& $.900/\mathbf{1.66}\ (\mathbf{1.12})$\\
Exponential, $\phi=1$
& $.907/2.59\ (2.54)$
& $.910/1.84\ (2.00)$
& $.918/1.90\ (1.79)$
& $.916/\mathbf{1.76}\ (\mathbf{1.10})$\\
Exponential, $\phi=2$
& $.908/3.02\ (2.25)$
& $.918/2.01\ (1.36)$
& $.922/2.07\ (1.47)$
& $.907/\mathbf{1.93}\ (\mathbf{1.11})$\\
Exponential, $\phi=6$
& $.910/3.59\ (1.87)$
& $.902/2.42\ (1.21)$
& $.911/2.50\ (1.26)$
& $.903/\mathbf{2.32}\ (\mathbf{1.16})$\\
Exponential, $\phi=8$
& $.907/3.64\ (1.54)$
& $.906/2.55\ (1.17)$
& $.910/2.62\ (1.12)$
& $.903/\mathbf{2.43}\ (\mathbf{1.06})$\\
Exponential, $\phi=30$
& $.906/3.74\ (1.07)$
& $.900/3.37\ (\mathbf{1.05})$
& $.911/3.47\ (1.08)$
& $.902/\mathbf{3.24}\ (\mathbf{1.05})$\\
\Matern{}, $\nu=3/2$
& $.909/3.56\ (1.96)$
& $.913/2.02\ (1.61)$
& $.919/2.08\ (1.55)$
& $.906/\mathbf{1.94}\ (\mathbf{1.31})$\\
\Matern{}, $\nu=5/2$
& $.913/3.55\ (2.23)$
& $.904/1.93\ (1.86)$
& $.909/1.99\ (1.68)$
& $.895/\mathbf{1.85}\ (\mathbf{1.30})$\\
Gaussian
& $.906/3.63\ (1.93)$
& $.906/1.97\ (1.64)$
& $.920/2.03\ (1.82)$
& $.908/\mathbf{1.89}\ (\mathbf{1.24})$\\
\bottomrule
\end{tabular}
\end{table}
\vspace{-0.4cm}
In all settings (Table~\ref{tab:exch}), whitening produces the narrowest interval. Its mean width is approximately $4\%$ and $7\%$ smaller than that of GSCP and LSCP, respectively. In comparison to naive split conformal, whitened intervals are $14\%$ to $48\%$ narrower across designs, with an average of $35\%$. The widths of the whitened intervals are also much more stable than those of the competing methods. The whitened interval has the lowest CVR in every setting (tied with GSCP at $\phi=30$). Its advantage over GSCP and LSCP is largest under strong dependence. Naive conformal has the highest CVR in the presence of moderate to strong spatial dependence. This is expected, since it does not account for any spatial information. The methods perform comparably in terms of stability under weak dependence ($\phi=30$), where all of them approach the independent-data benchmark of one. Therefore, Regime A shows that even under random sampling and splitting, where whitening has the least to gain, it produces the narrowest and most stable intervals. 

We only use Gaussian processes as residuals here. However, Theorem~\ref{thm:whitened_interval} covers any elliptical residual law, and Supplementary Material Section~\ref{supp:impl_ngauss} reports results under a spatial $t$ process.

\paragraph{\emph{Regime B: the value of predictive calibration residuals.}}
Regime B considers six settings in which the calibration set is, to different degrees, more informative about the target set than the fitting set. In the three halo designs shown in the upper row of Figure~\ref{fig:designB}, the $k$ nearest points to each target are assigned to calibration data. Increasing $k$ therefore shifts more local information about the target from the fitting to the calibration set. The lower row considers three more structured sampling geometries: transect sampling, boundary prediction, and a cluster monitoring network. For each setting, we measure how much smaller the predictive standard deviation at the target is, under the true covariance, when conditioning on both the fitting and calibration sets rather than on the fitting set alone. This is given in Table~\ref{tab:calinfo} under \emph{Available gain}, as it measures how much the additional conditioning on the calibration set helps whitening over GSCP.

\begin{figure}[t]
\centering
\includegraphics[width=\textwidth]{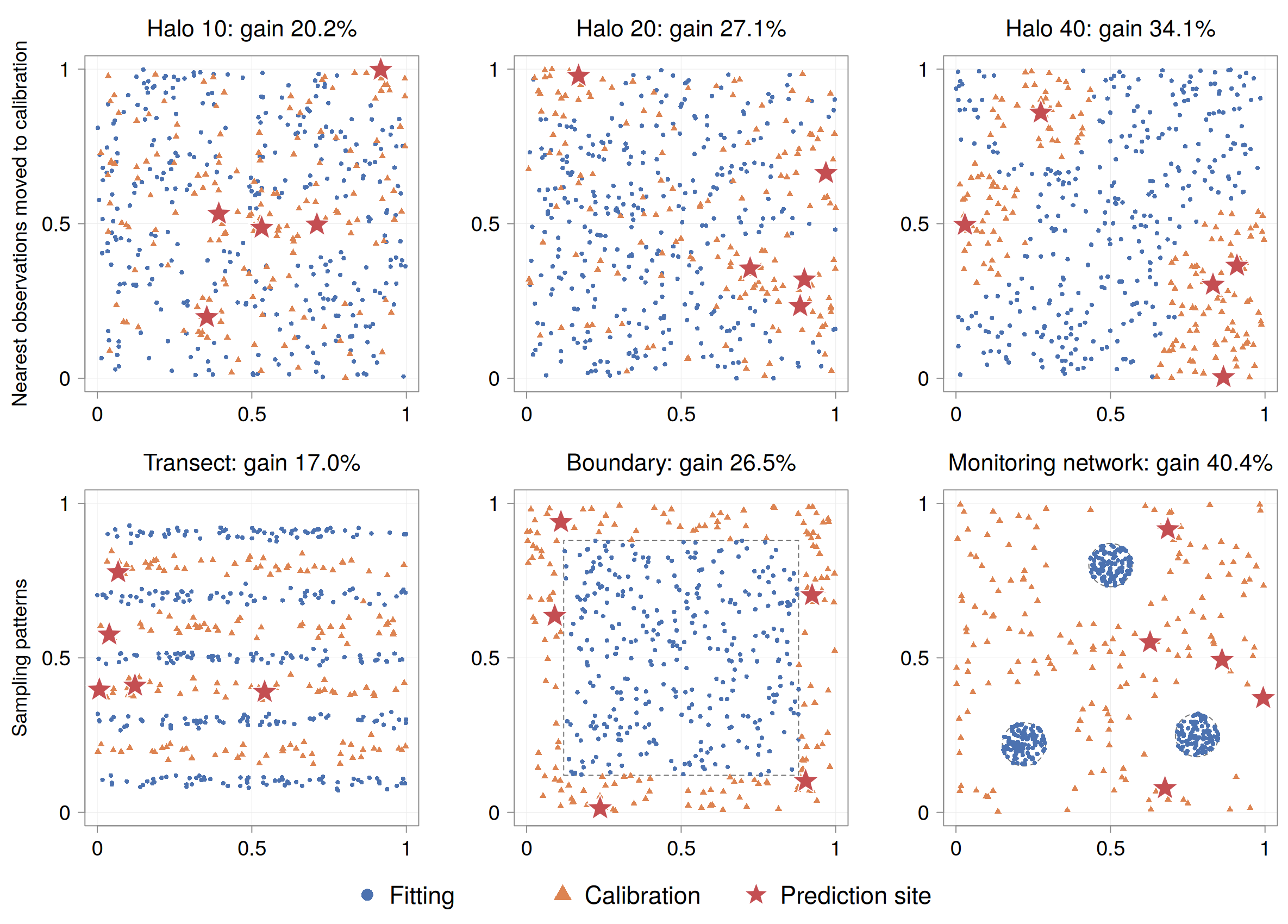}
\caption{Spatial designs used to vary the predictive information contained in the calibration residuals. Blue circles denote fitting observations, orange triangles denote calibration observations, and red stars denote prediction sites. The halo designs assign progressively more observations near each target to calibration, while the lower panels represent transect sampling, boundary prediction, and a cluster monitoring network. Each title reports the available gain.}
\label{fig:designB}
\end{figure}
\begin{table}[t]
\centering
\small
\setlength{\tabcolsep}{2.5pt}
\caption{Value of informative calibration sets. Method entries report coverage/mean width. The last two columns give the available gain and the achieved width reduction of whitening relative to GSCP, both in percent. Nominal coverage is $0.90$.}
\label{tab:calinfo}
\begin{tabular}{lcccccc}
\toprule
Design & Naive & GSCP & LSCP & Whitened & Available gain & Achieved gain\\
\midrule
Halo $10$
& $.900/3.01$ & $.894/2.23$ & $.907/2.30$ & $.898/\mathbf{1.92}$ & $20.2$ & $14.0$\\
Halo $20$
& $.900/3.03$ & $.899/2.35$ & $.910/2.41$ & $.909/\mathbf{1.91}$ & $27.1$ & $18.7$\\
Halo $40$
& $.904/3.12$ & $.908/2.57$ & $.916/2.62$ & $.913/\mathbf{1.93}$ & $34.1$ & $24.8$\\
Transect
& $.904/2.95$ & $.902/2.25$ & $.912/2.30$ & $.903/\mathbf{1.92}$ & $17.0$ & $14.6$\\
Boundary
& $.890/3.27$ & $.894/2.48$ & $.910/2.57$ & $.899/\mathbf{1.97}$ & $26.5$ & $20.3$\\
Monitoring network
& $.905/3.23$ & $.899/3.06$ & $.917/3.14$ & $.918/\mathbf{2.16}$ & $40.4$ & $29.3$\\
\bottomrule
\end{tabular}
\end{table}
\vspace{-0.8cm}
In Table~\ref{tab:calinfo}, all methods remain near nominal coverage, so their widths are directly comparable. As $k$ increases in the halo designs, more observations near the targets are assigned to calibration. GSCP and LSCP widen because neither can use those observations to predict the target. The whitened width stays nearly unchanged, since the total information available from the labelled observations remains constant in these designs. Whitening also gives the narrowest interval in the other three designs. Across the six designs, the width reduction of whitening relative to GSCP (\emph{Achieved gain}) has correlation $0.98$ with the available gain. Regime B demonstrates that the advantage of whitening increases as the calibration data become more informative about the target than the fitting set.

\paragraph{\emph{Regime C: coverage beyond random sampling.}}
Regime C distinguishes nonrandom spatial assignment from failures of calibration--target comparability. The upper row of Figure~\ref{fig:designC} assigns the fitting, calibration, and target observations to different parts of the domain under a stationary process (Table~\ref{tab:nonexch}). All methods maintain near-nominal coverage here, and whitening produces the narrowest interval. It is roughly $40\%$ narrower than naive split conformal and $6$–$39\%$ narrower than GSCP and LSCP.

\begin{figure}[t]
\centering
\includegraphics[width=\textwidth]{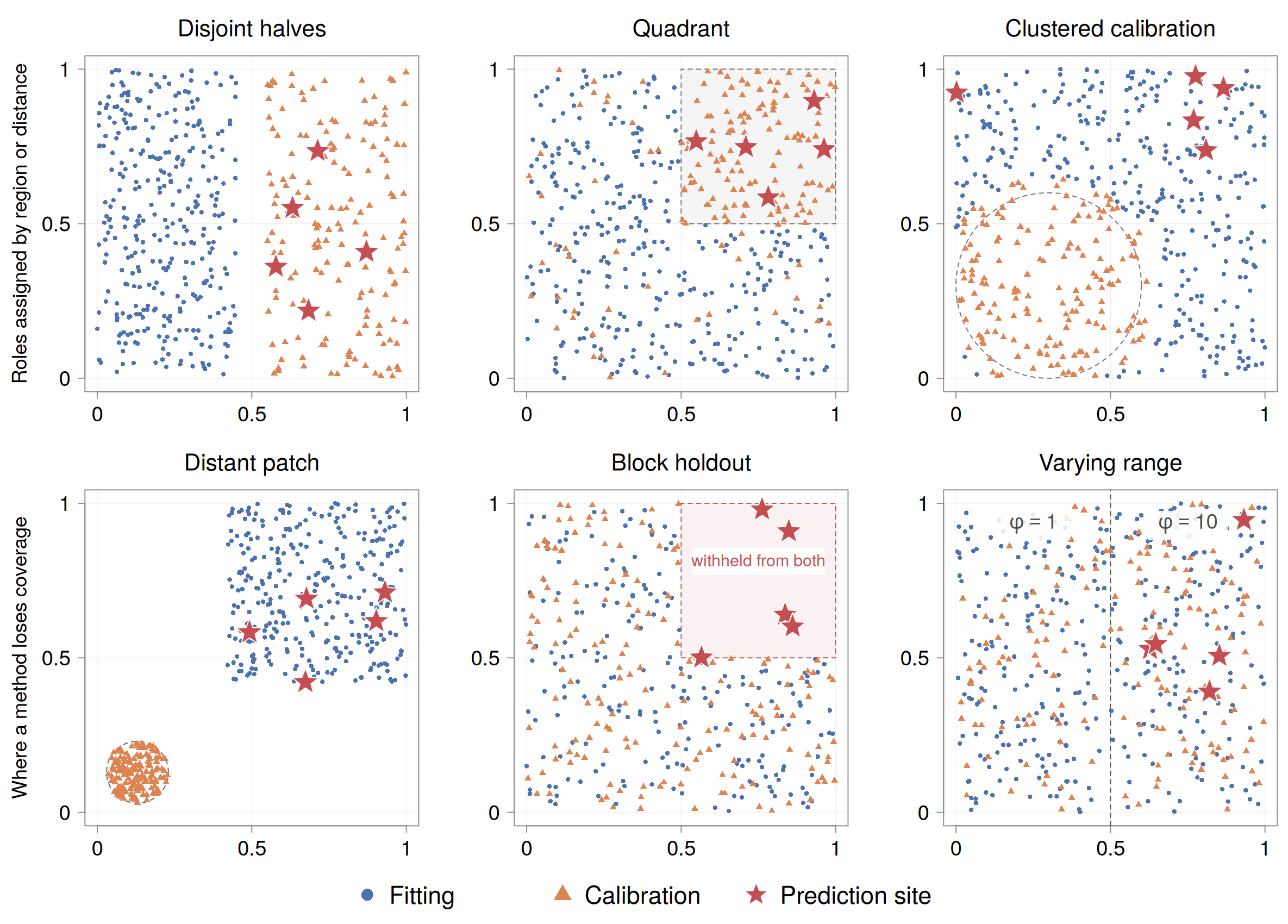}
\caption{Designs used to distinguish nonrandom spatial assignment from failures of calibration--target comparability. Symbols are as in Figure~\ref{fig:designB}. The upper row shows disjoint-halves, quadrant, and clustered-calibration designs under a stationary process. The lower row shows distant-patch calibration, block holdout, and varying dependence range.}
\label{fig:designC}
\end{figure}
\begin{table}[t]
\centering
\small
\setlength{\tabcolsep}{2.5pt}
\caption{Coverage beyond random sampling. Method entries report coverage/mean width. The last column gives the coverage predicted for the whitened interval from $200$ separate diagnostic observations in the target region. Under distant patch, LSCP is unbounded at $25\%$ of the targets, which we count as covered; its finite intervals have coverage/mean width $.876/2.14$. Nominal coverage is $0.90$.}
\label{tab:nonexch}
\begin{tabular}{lccccr} \toprule Design & Naive & GSCP & LSCP & Whitened & Predicted coverage\\ \midrule
\multicolumn{6}{l}{\emph{Role-dependent sampling}}\\
Disjoint halves       & $.897/3.41$ & $.900/3.22$ & $.910/3.24$ & $.914/\mathbf{1.98}$ & $.908$\\
Quadrant              & $.889/3.15$ & $.895/2.73$ & $.902/2.79$ & $.907/\mathbf{1.93}$ & $.905$\\
Clustered calibration & $.903/3.21$ & $.906/2.04$ & $.923/2.18$ & $.904/\mathbf{1.93}$ & $.901$\\
\midrule
\multicolumn{6}{l}{\emph{Coverage stress tests}}\\
Distant patch         & $.914/3.34$ & $.850/1.90$ & $.907/\infty$ & $.908/\mathbf{1.86}$ & $.911$\\
Block holdout         & $.853/3.05$ & $.870/2.74$ & $.880/2.89$ & $.873/\mathbf{2.64}$ & $.886$\\
Varying range         & $.866/3.39$ & $.876/2.39$ & $.902/2.59$ & $.878/\mathbf{2.29}$ & $.861$\\
\bottomrule\end{tabular}
\end{table}
\vspace{-0.95cm}
In the distant-patch design, the calibration observations are located in a small patch, far from the targets. GSCP undercovers here. LSCP does not solve this problem either, and reaches nominal coverage only when its unbounded intervals are counted as covering. The calibration errors in the patch are closely correlated with each other but nearly uncorrelated with the target error. Whitening explicitly accounts for this correlation, and hence attains nominal coverage with the narrowest interval.

Under block holdout the covariance family is correct, but the calibration observations have near neighbours in the fitting set and the targets do not. As a result, their predictive variances respond differently to error in the estimated range, which can leave the transformed target errors larger than the calibration scores. Every method undercovers here.

Under varying range, every method undercovers except LSCP. Localization downweights calibration observations far from the target. The two halves are separated by a straight line, so this prioritises calibration sites in the target's own half that share its dependence range. A stationary working covariance cannot capture the sudden jump in the range, which results in undercoverage for our proposed approach.

It is reasonable to wonder if such undercoverage can be predicted in the absence of knowledge about the actual data-generating process. Our proposed diagnostic in Section~\ref{sec:theory_loc} is developed to answer this. We evaluate the diagnostic on Regime C, which includes the designs where coverage fails. In each replicate, $\widehat\krel$ is estimated from $200$ additional labelled observations drawn from the region the targets occupy, separate from the $501$ sites used for fitting, calibration, and evaluation. The predicted coverage is $G(\xi_{1-\alpha}/\widehat\krel)$. Please refer to Supplementary Material Section~\ref{supp:impl_prosp} for implementation details.

In the four designs where whitening maintains coverage, the predicted and realized coverage differ by at most $0.006$. Under block holdout and varying range, the diagnostic correctly predicts undercoverage, within $2$ percentage points of the realized value. Regime C shows that the diagnostic separates the settings in which whitening remains reliable from those at risk of undercoverage, and correctly approximates the size of the loss.

\section{Fine particulate matter monitoring network}
\label{sec:data}

\subsection{Data and evaluation}
\label{sec:data_setup}
Air-quality assessments need estimates of annual PM$_{2.5}$ concentrations on a fine grid, but most grid cells have no monitor. Assessing exposure there requires predicting the concentration and knowing how reliable the prediction is. Environmental and geographic covariates are often used to predict the mean concentration, but their relationship with PM$_{2.5}$ is complex and unknown. Therefore, it is difficult to rely on inference from a parametric model, which often requires its assumptions to be right to guarantee the validity of the inference. Conformal prediction is a natural alternative; however, its naive form ignores the spatial correlation among nearby monitors. The monitoring network is thus precisely the kind of application for which spatial conformal inference is intended.

We analyse annual mean PM$_{2.5}$ concentrations in 2022 at $723$ Environmental Protection Agency monitors covering the contiguous United States. A random forest that uses environmental covariates has an out-of-bag $R^2$ of $0.23$. After fitting a spatial model to the residuals, we find that spatial dependence accounts for $67\%$ of the residual variance. The practical range of the spatial dependence is around $460$ kilometres. Nearby residuals are therefore likely to be informative about one another, as a typical monitor has around $69$ neighbouring monitors within that range. For details on data sources and preparation, please see Supplementary Material Section~\ref{supp:data}. 

To evaluate the methods, we hold out monitors and check if their intervals contain the recorded concentrations. The remaining monitors are split at random into fitting (two thirds) and calibration (one third) sets. We address two questions as follows:
\begin{enumerate}
    \item Does whitening result in narrower intervals at comparable coverage inside the monitoring network? We split the monitors into twenty \emph{random folds} and use each fold in turn as the test set.

    \item Does coverage degrade when predicting in a location where there are no observed data? If so, can the diagnostic detect the regions at risk? We use \emph{whole-state holdout}, where each of the $49$ state-level regions is held out in turn and its monitors predicted from observations outside the state.
\end{enumerate}

\subsection{Random folds}
\label{sec:data_results}
When we pool the results across the twenty folds, the four methods remain broadly close to nominal coverage (Table~\ref{tab:epa_overall}). The whitened method produces the narrowest interval, by approximately $6\%$ and $4\%$ against GSCP and LSCP, respectively, and $18\%$ against naive split conformal. The prediction performance may depend on the distance between a target monitor and the nearest fitting monitor. Thus, Figure~\ref{fig:epa_quint} groups target monitors into five quintiles according to their distances. The naive interval ignores location; thus, it is consistently wide across all five groups. The other three methods start about $26$ to $29\%$ narrower than naive in the nearest quintile and widen with distance. The whitened interval is the narrowest everywhere except in the nearest quintile, where GSCP and LSCP are narrower but undercover by seven to eight percentage points. The widths of the GSCP and LSCP intervals reach those of naive in the furthest quintile, but the whitened interval remains $8$ to $10\%$ below them.

\begin{table}[t]
\centering
\small
\caption{Coverage and interval width in the PM$_{2.5}$ application. Each entry reports coverage/mean width in $\mu\mathrm{g}/\mathrm{m}^3$. Random-fold results are pooled across the twenty folds. Whole-state results are computed for each of the $49$ regions and averaged with equal weight. Nominal coverage is $0.90$.}
\label{tab:epa_overall}
\begin{tabular}{lcccc}
\toprule
Scheme & Naive & GSCP & LSCP & Whitened \\
\midrule
Random folds        & $.903/5.94$ & $.887/5.16$ & $.892/5.07$ & $.898/4.85$ \\
Whole-state holdout & $.868/5.44$ & $.873/5.23$ & $.873/5.10$ & $.879/5.13$ \\
\bottomrule
\end{tabular}
\end{table}
\vspace{-0.45cm}

\begin{figure}[t]
\centering
\includegraphics[width=\textwidth]{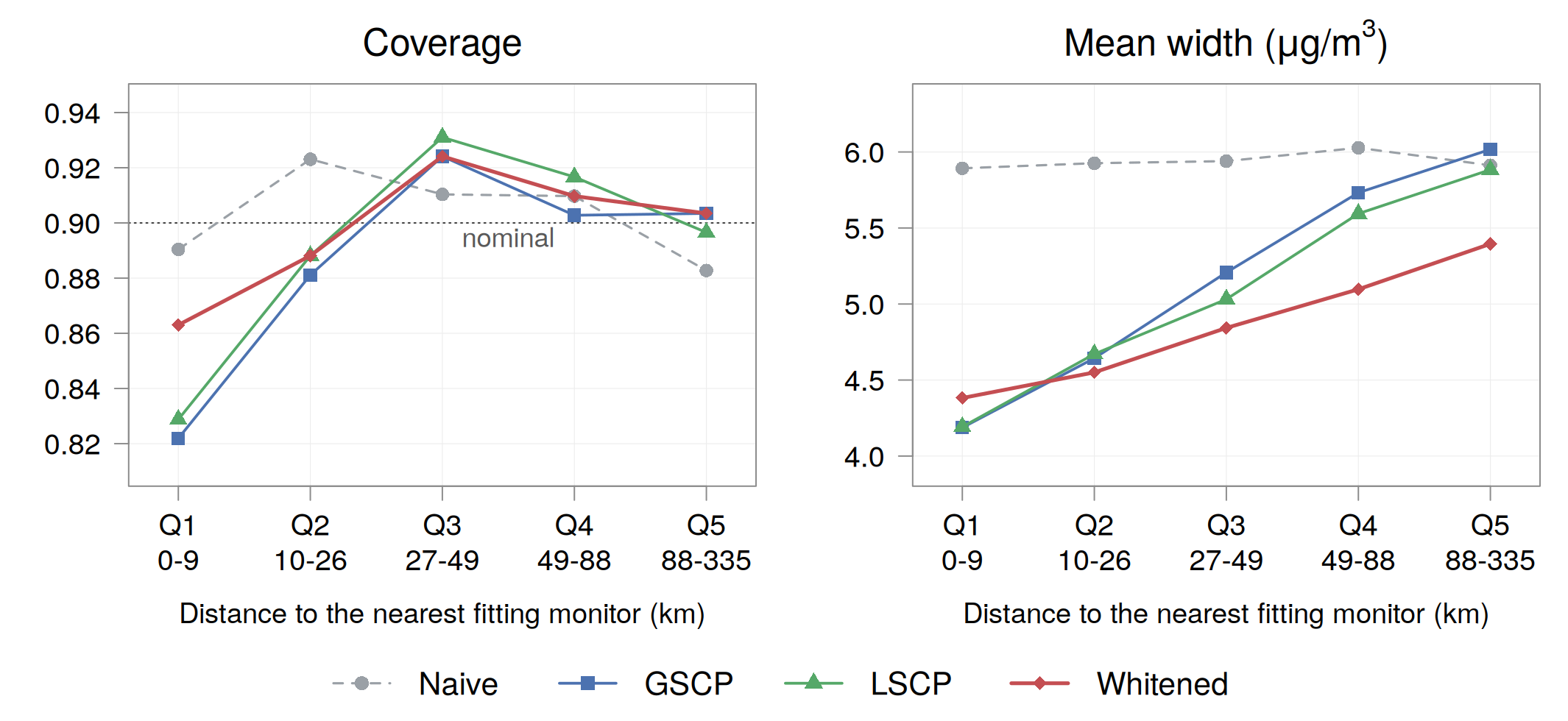}
\caption{Empirical coverage and mean width by distance to the nearest fitting monitor under random folds at nominal coverage $0.90$. GSCP and LSCP reach the naive width as distance increases. Their smaller widths in the nearest quintile are accompanied by coverage seven to eight percentage points below nominal.}
\label{fig:epa_quint}
\end{figure}
\vspace{-0.45cm}
\subsection{Whole-state holdout}
\label{sec:data_state}
Whole-state holdout is much more difficult because each target monitor is in a region with no fitting or calibration observations. As a result, all four approaches fall short of nominal coverage. LSCP and the whitened interval are the two best-performing methods here, and are practically indistinguishable in both coverage and width. The general undercoverage is expected, since here the calibration scores are derived from regions that do not always mirror the one being predicted. The most pertinent question is whether this failure is predictable in each region, which is what the diagnostic of Section~\ref{sec:theory_loc} answers. 

We test this prospectively by randomly separating the monitors in each held-out region into a diagnostic half and a disjoint evaluation half. We use the responses in the diagnostic half to estimate the scale, and the evaluation half to measure the coverage. We repeat the split $20$ times to reduce sensitivity to how the monitors in smaller regions are divided. To ensure there are enough monitors in both halves, we restrict the analysis to regions with at least sixteen monitors, which leaves us with $19$ regions.

When we average over the splits, $\widehat\krel$ has Spearman correlation $-0.90$ with realized coverage across the $19$ regions. We group the regions into terciles of $\widehat\krel$ (Table~\ref{tab:epa_diag}). Since a higher value of $\widehat\krel$ indicates undercoverage, the highest tercile of $\widehat\krel$ corresponds to an estimated coverage of $0.769$, which increases to $0.987$ in the lowest tercile. Realized coverage closely follows this trend and is within one percentage point of the estimate in the lower two terciles. In the highest, the estimate overstates it by about two and a half points. 

\begin{table}[t]
\centering
\small
\caption{Regional diagnostic under whole-state holdout. Regions are grouped into terciles of $\widehat\krel$, estimated from a diagnostic half of each region's monitors. Coverage is measured on the disjoint evaluation half, and estimated coverage is $G(\xi_{1-\alpha}/\widehat\krel)$. Estimated and realized coverage are averaged over $20$ random splits within each region and then over regions with equal weight, and regions with fewer than sixteen monitors are excluded. Nominal coverage is $0.90$.}
\label{tab:epa_diag}
\begin{tabular}{lcccc} \toprule Tercile of $\widehat\krel$ & Regions & Mean $\widehat\krel$ & Estimated coverage & Realized coverage \\ \midrule Lowest & $7$ & $0.60$ & $0.987$ & $0.979$ \\ Middle & $6$ & $0.93$ & $0.915$ & $0.909$ \\ Highest & $6$ & $1.38$ & $0.769$ & $0.744$ \\ \bottomrule \end{tabular}
\end{table}
\vspace{-0.5cm}
\section{Discussion}
\label{sec:disc}
Existing spatial conformal methods do not use the predictive power of the calibration residuals, which leaves shared spatial information in both the calibration and the target residuals. Conditioning on them directly is not possible, since each calibration residual would then be conditioned on itself, resulting in a degenerate score. We propose to bypass this problem by conditioning on the calibration residuals sequentially. When the residuals are elliptical and the working covariance is correct, this restores score exchangeability. Under additional conditions, it also stabilizes the threshold and produces an interval asymptotically as short as the oracle.

When the working covariance captures the residual dependence sufficiently well, the coverage is nominal and the whitened interval is shorter and more stable than alternatives. The advantage of this method is largest when the observations near the target are in the calibration set, which cannot be used for prediction in GSCP and LSCP.  A similar pattern holds for the air-quality application. Under random folds, whitening produced the narrowest interval, with its margin over GSCP and LSCP largest at monitors far from the fitting set. Under whole-state holdout, it was comparable to LSCP.

Since we do not know the true data-generating process in practice, the working covariance may not be an adequate representation of the residual dependence structure. As a result, all methods that use a working covariance matrix, including GSCP, LSCP and our whitening approach, might suffer from undercoverage. We develop a diagnostic for our method that identifies the regions at risk without knowing the true data-generating process. In the simulations it captured the coverage loss, and in the application it identified the high-risk regions.

Although not entirely distribution-free, the approach is model-lean. An elliptical residual law and a correct working covariance are sufficient for exact coverage, and further conditions are needed for efficiency and local-coverage outcomes. Additionally, the estimated covariance is treated as fixed in the efficiency and misspecification results. Future research should naturally focus on building a diagnostic-based adjustment with a formal coverage guarantee and extending the guarantees to approximated and nonstationary covariance models. 

\section*{Code availability}
Code to reproduce all numerical results is available at \url{https://github.com/ArkajyotiSaha/spatial-conformal-whitening}.

\section*{Acknowledgements}
We acknowledge the use of generative AI to help polish the manuscript and write code for the numerical experiments. The authors independently reviewed all work and take sole responsibility for the content.

\bibliographystyle{apalike}
\bibliography{references}

\clearpage
\normalsize

\setcounter{page}{1}
\renewcommand{\thepage}{S\arabic{page}}

\setcounter{section}{0}
\renewcommand{\thesection}{S\arabic{section}}

\setcounter{subsection}{0}
\renewcommand{\thesubsection}{\thesection.\arabic{subsection}}

\setcounter{subsubsection}{0}
\renewcommand{\thesubsubsection}{\thesubsection.\arabic{subsubsection}}

\setcounter{equation}{0}
\renewcommand{\theequation}{S\arabic{equation}}

\setcounter{theorem}{0}
\renewcommand{\thetheorem}{S\arabic{theorem}}

\setcounter{lemma}{0}
\renewcommand{\thelemma}{S\arabic{lemma}}

\setcounter{proposition}{0}
\renewcommand{\theproposition}{S\arabic{proposition}}

\setcounter{assumption}{0}
\renewcommand{\theassumption}{S\arabic{assumption}}
\renewcommand{\theHsection}{S\arabic{section}}
\renewcommand{\theHequation}{S\arabic{equation}}
\renewcommand{\theHtheorem}{S\arabic{theorem}}
\renewcommand{\theHlemma}{S\arabic{lemma}}
\renewcommand{\theHproposition}{S\arabic{proposition}}
\renewcommand{\theHassumption}{S\arabic{assumption}}
\setcounter{table}{0}\renewcommand{\thetable}{S\arabic{table}}\renewcommand{\theHtable}{S\arabic{table}}
\setcounter{figure}{0}\renewcommand{\thefigure}{S\arabic{figure}}\renewcommand{\theHfigure}{S\arabic{figure}}

\newcommand{\Fbar}{\overline{F}}
\newcommand{\bU}{\mathbf{U}}
\newcommand{\TV}{\operatorname{TV}}
\newcommand{\tr}{\operatorname{tr}}
\providecommand{\bSigma}{\boldsymbol{\Sigma}}
\providecommand{\operator}[1]{\operatorname{#1}}
\begin{center}
{\Large\bfseries Supplementary Material for\par}
\vspace{0.5em}
{\Large\bfseries ``Conformal Prediction for Spatially Dependent Data via Sequential Whitening''\par}
\end{center}

\vspace{1em}

\noindent This Supplementary Material contains the technical proofs, implementation details, and supporting numerical results for the main text. Sections~\ref{supp:sec_whitening}--\ref{supp:sec_eff} develop the exact-whitening theory, including finite-sample validity and oracle efficiency. Sections~\ref{supp:sec_gap} and~\ref{supp:sec_loc} study imperfect whitening and develop the prospective relative-scale diagnostic. Section~\ref{supp:sec_impl} contains details of the simulation designs and competing methods, and examines sensitivity to the main implementation choices. Section~\ref{supp:data} then describes the details of data processing steps for the PM$_{2.5}$  application.

\section{Sequential prediction is triangular whitening}
\label{supp:sec_whitening}

We prove Lemma~\ref{lem:sequential_whitening} here. This links the implemented sequential predictors to the matrix transformation used in the following arguments.

\begin{proof}[Proof of Lemma~\ref{lem:sequential_whitening}]
Fix $i$ and recall $k_i=n_{\mathcal F}+i$ and $I_i=\{1,\ldots,k_i-1\}$. Let
$\bell=(\ell_{k_i1},\ldots,\ell_{k_i,k_i-1})^{\top}$ be the first $k_i-1$ entries of the $k_i$th row of
$\bL$. The leading block of $\bL$ and the corresponding blocks of
$\bSigma_+^{\mathrm w}=\bL\bL^{\top}$ satisfy
\begin{equation}
\label{eq:supp_chol_blocks}
\bL_{1:k_i,1:k_i}
=
\begin{pmatrix}
\bL_{I_i,I_i}&\mathbf 0\\
\bell^{\top}&\ell_{k_ik_i}
\end{pmatrix},
\quad
\bSigma_{I_i,I_i}^{\mathrm w}=\bL_{I_i,I_i}\bL_{I_i,I_i}^{\top},
\quad
\bSigma_{k_i,I_i}^{\mathrm w}=\bell^{\top}\bL_{I_i,I_i}^{\top},
\quad
\bSigma_{k_i,k_i}^{\mathrm w}=\bell^{\top}\bell+\ell_{k_ik_i}^2.
\end{equation}
Substituting into \eqref{eq:seq_musd} gives
$\widehat\mu_i=\bell^{\top}\bL_{I_i,I_i}^{-1}\br_{+,I_i}$ and $\widehat\nu_i=\ell_{k_ik_i}^2$. We can then use forward
substitution in \eqref{eq:supp_chol_blocks} to get 
\[
\big(\bL^{-1}\br_+\big)_{k_i}
=
\frac{r_i-\widehat\mu_i}{\widehat\nu_i^{1/2}}
=e_i,
\]
which is \eqref{eq:sequential_whitening}.
\end{proof}

\section{Exact coverage after whitening}
\label{supp:sec_exact}

Theorem~\ref{thm:whitened_interval} demonstrates that the same transformation that enables us to utilise calibration residuals to predict the target also recovers the exchangeability needed for the conformal coverage guarantee. 

\begin{proof}[Proof of Theorem~\ref{thm:whitened_interval}]
$\br_+$ is centred and elliptical with scatter matrix $\bSigma_+^{\mathrm w}=\bL\bL^{\top}$. Hence, we can write it as follows:
\begin{equation}
\label{eq:supp_elliptical}
\br_+\overset{d}{=}R\bL\bU,
\end{equation}
where $\bU$ is uniform on the unit sphere $\mathbb S^{N-1}$ and $R>0$ is independent of $\bU$ \citep[Theorem 2.5]{fang1990}. If $\E R^2<\infty$, the covariance of $\br_+$ is $(\E R^2/N)\bSigma_+^{\mathrm w}$. In the Gaussian case with covariance $\bSigma_+^{\mathrm w}$, $R^2\sim\chi^2_N$.

\emph{Step 1: whitening produces a spherical vector.} Applying $\bL^{-1}$ to \eqref{eq:supp_elliptical} gives
\begin{equation}
\label{eq:supp_spherical}
\be_+
=
\bL^{-1}\br_+
\overset{d}{=}
R\bU.
\end{equation}
For every orthogonal $N\times N$ matrix $\bm Q$, rotational invariance of the uniform distribution on the sphere gives $\bm Q\bU\overset{d}{=}\bU$. Independence of $R$ and $\bU$ therefore implies $\bm Q\be_+\overset{d}{=}\be_+$.

\emph{Step 2: the transformed calibration and target errors are exchangeable.} Let $\bm Q_\pi$ permute the last $m+1$ coordinates and fix the initial $n_{\mathcal F}$ coordinates. Since $\bm Q_\pi$ is orthogonal, \eqref{eq:supp_spherical} implies $\bm Q_\pi\be_+\overset{d}{=}\be_+$. Lemma~\ref{lem:sequential_whitening} shows that the last $m+1$ coordinates are $(e_1,\ldots,e_{m+1})$, which proves that these errors, and thus their absolute values, are exchangeable.

\emph{Step 3: coverage is a rank event.} Lemma~\ref{lem:sequential_whitening} at $i=m+1$ gives
\begin{equation}
\label{eq:supp_cover_iff}
y_0\in\widehat C_\alpha^{\mathrm W}(\bx_0,\bs_0)
\quad\Longleftrightarrow\quad
|e_{m+1}|\le\widehat q^{\,\mathrm W}_{1-\alpha}.
\end{equation}
The law of $\be$ is absolutely continuous on $\R^{m+1}$, hence absolute transformed errors have no ties almost surely. Therefore, their exchangeability makes the rank of $|e_{m+1}|$ among $|e_1|,\ldots,|e_{m+1}|$ uniform on $\{1,\ldots,m+1\}$. By construction, the event in \eqref{eq:supp_cover_iff} occurs exactly when this rank is at most $j_\alpha$. Consequently,
\begin{equation}
\label{eq:supp_exact_coverage}
\Prob\{y_0\in\widehat C_\alpha^{\mathrm W}(\bx_0,\bs_0)\}
=
\frac{j_\alpha}{m+1}
=
\frac{\lceil(1-\alpha)(m+1)\rceil}{m+1}
\ge 1-\alpha.
\end{equation}
When $j_\alpha=m+1$, this probability becomes one and it results in an unbounded interval.
\end{proof}

\section{Interval width and asymptotic efficiency}
\label{supp:sec_eff}

The whitened and GSCP intervals differ in two ways. Whitening replaces the GSCP calibration quantile with a quantile of the whitened scores. It also replaces the predictive variance given the fitting set with the predictive variance given the full labelled sample. In Section~\ref{supp:proof_width}, we prove that the predictive variances determine the asymptotic width ratio of the whitened and GSCP intervals. Section~\ref{supp:sec_oracle_eff} then proves Theorem~\ref{thm:eff}, which shows that the proposed whitened interval is asymptotically as short as the shortest conditionally valid interval based on the full labelled sample.

\subsection{Proof of the width ratio}
\label{supp:proof_width}

\begin{proof}[Proof of Proposition~\ref{prop:width}]
The GSCP and whitened widths satisfy
\begin{equation}
\label{eq:supp_width_ratio}
\frac{W^{\mathrm W}(\bs_0)}{W^{\mathrm G}(\bs_0)}
=
\frac{\widehat q^{\,\mathrm W}_{1-\alpha}}
     {\widehat q^{\,\mathrm G}_{1-\alpha}}
\left\{
\frac{\widehat\nu_{\mathcal F\cup\mathcal C}(\bs_0)}
     {\widehat\nu_{\mathcal F}(\bs_0)}
\right\}^{1/2}.
\end{equation}
By the assumption of Proposition~\ref{prop:width}, the empirical distributions of the two sets of absolute calibration scores converge in probability to a common limit $G_0$, which is continuous and strictly increasing at $\zeta_{1-\alpha}=G_0^{-1}(1-\alpha)>0$. Hence
\[
\widehat q^{\,\mathrm W}_{1-\alpha}\overset{p}{\longrightarrow}\zeta_{1-\alpha},
\qquad
\widehat q^{\,\mathrm G}_{1-\alpha}\overset{p}{\longrightarrow}\zeta_{1-\alpha}.
\]
Their ratio is therefore $1+o_{\Prob}(1)$. Substitution into \eqref{eq:supp_width_ratio} proves \eqref{eq:width_ratio}.
\end{proof}

\subsection{Proof of asymptotic oracle efficiency}
\label{supp:sec_oracle_eff}

Let $\mathcal D=\mathcal F\cup\mathcal C$. The coordinates of $\be_+=\bL^{-1}\br_+$ are i.i.d.\ with distribution $F$ according to the assumptions of Theorem~\ref{thm:eff}. Here $F$ has mean zero and unit variance, and its density $f_e$ is continuous, symmetric about zero, and strictly decreasing on $(0,\infty)$. Let us define
\begin{equation}
\label{eq:supp_H}
H(t)=\Prob_F(|e|\le t),
\qquad
\xi^H_{1-\alpha}=H^{-1}(1-\alpha),
\end{equation}
and assume the density of $H$ is positive and continuous at $\xi^H_{1-\alpha}$. Next, we prove Theorem~\ref{thm:eff} in two steps. First, Lemma~\ref{supp:lem_oracle} gives a closed form for the oracle width in terms of $H$, and then we prove that the whitened interval attains that width asymptotically.

\begin{lemma}[Form of the oracle interval]
\label{supp:lem_oracle}
Under the assumptions of Theorem~\ref{thm:eff}, the shortest interval with conditional coverage at least $1-\alpha$ given $\br_{\mathcal D}$ is
\begin{equation}
\label{eq:supp_oracle}
I^{\mathrm o}(\bs_0)
=
\widehat f(\bx_0)
+\E(r_0\mid\br_{\mathcal D})
\pm
\xi^H_{1-\alpha}\nu(\bs_0)^{1/2},
\qquad
\nu(\bs_0)=\Var(r_0\mid\br_{\mathcal D}),
\end{equation}
and hence $W^{\mathrm o}(\bs_0)=2\xi^H_{1-\alpha}\nu(\bs_0)^{1/2}$.
\end{lemma}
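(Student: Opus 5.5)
The plan is to compute the conditional law of $r_0$ given $\br_{\mathcal D}$ explicitly through the whitening map, and then reduce the shortest-interval problem to a one-dimensional statement about a symmetric unimodal density.

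\emph{Step 1: the conditional law is a location–scale family.} Since $\bL$ is lower triangular with positive diagonal, the map $\br_{\mathcal D}\mapsto(e_{+,1},\ldots,e_{+,N-1})$ is a bijection. Conditioning on $\br_{\mathcal D}$ is therefore the same as conditioning on the first $N-1$ coordinates of $\be_+$. Lemma~\ref{lem:sequential_whitening} at $i=m+1$ gives
\[
r_0=\widehat\mu_{m+1}+\widehat\nu_{m+1}^{1/2}e_{m+1},
\]
where $\widehat\mu_{m+1}$ is a function of $\br_{\mathcal D}$ and $\widehat\nu_{m+1}=\ell_{NN}^2$ is deterministic. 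By the independence assumption, $e_{m+1}$ is independent of the preceding coordinates and has law $F$. So, given $\br_{\mathcal D}$, $r_0$ has density
\[
\widehat\nu_{m+1}^{-1/2}\,f_e\bigl\{(t-\widehat\mu_{m+1})/\widehat\nu_{m+1}^{1/2}\bigr\}.
\]
Because $F$ has mean zero and unit variance, this yields $\E(r_0\mid\br_{\mathcal D})=\widehat\mu_{m+1}$ and $\nu(\bs_0)=\widehat\nu_{m+1}$. The working covariance being correct means $\bL\bL^\top$ is the true covariance, so these coincide with the oracle quantities.

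\emph{Step 2: the shortest interval for one symmetric unimodal density.} Conditional coverage of $y_0=\widehat f(\bx_0)+r_0$ by a $\br_{\mathcal D}$-measurable interval $[a,b]$ is coverage of $e_{m+1}$ by a deterministic (given $\br_{\mathcal D}$) affine image of $[a,b]$. The width scales by $\widehat\nu_{m+1}^{1/2}$, so it suffices to show the following: among intervals $[c,d]$ with $F(d)-F(c)\ge1-\alpha$, the shortest is $[-\xi^H_{1-\alpha},\xi^H_{1-\alpha}]$. This is the standard highest-density argument.

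For any $[c,d]$ of length $w<2\xi^H_{1-\alpha}$, compare it with the centered interval of the same length. Since $f_e$ is symmetric and strictly decreasing on $(0,\infty)$, the centered interval has strictly larger mass; one can see this by differentiating $c\mapsto F(c+w)-F(c)$, which vanishes only at $c=-w/2$ and is maximized there. The centered interval has mass $H(w/2)<1-\alpha$, because $H$ is strictly increasing near $\xi^H_{1-\alpha}$ (its density there is positive). Hence no interval shorter than $2\xi^H_{1-\alpha}$ achieves coverage $1-\alpha$. The centered interval of half-width $\xi^H_{1-\alpha}$ attains exactly $1-\alpha$ by continuity of $H$, which gives the minimum. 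It is unique by the strict monotonicity.

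\emph{Step 3: transform back.} Mapping $[-\xi^H_{1-\alpha},\xi^H_{1-\alpha}]$ back through $t\mapsto\widehat f(\bx_0)+\widehat\mu_{m+1}+\widehat\nu_{m+1}^{1/2}t$ gives \eqref{eq:supp_oracle}, with $W^{\mathrm o}=2\xi^H_{1-\alpha}\nu(\bs_0)^{1/2}$.

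The main obstacle is Step 1: justifying that conditional independence of $e_{m+1}$ from $\br_{\mathcal D}$ holds exactly. This needs full independence of the coordinates, not just uncorrelatedness, which is why the i.i.d.\ assumption in Theorem~\ref{thm:eff} is essential. It also requires carefully verifying the invertibility of the triangular map on the first $N-1$ coordinates. The rest is routine.
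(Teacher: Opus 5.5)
Your proposal is correct and takes essentially the same route as the paper. Both arguments write $r_0$ through the last row of the triangular whitening, which is equivalent to Lemma~\ref{lem:sequential_whitening} at $i=m+1$. Both then use that the first $N-1$ whitened coordinates are a bijection of $\br_{\mathcal D}$ and are independent of $e_{m+1}\sim F$, which gives a location--scale conditional law with mean $\widehat\mu_{m+1}$ and variance $\ell_{NN}^2$, and finish by shifting and scaling the shortest interval $[-\xi^H_{1-\alpha},\xi^H_{1-\alpha}]$ for the symmetric, strictly unimodal $F$. The only difference is that you prove the highest-density step through the sign of $f_e(c+w)-f_e(c)$, whereas the paper simply asserts it.
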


\begin{proof}
Because $\bL$ is lower triangular and invertible, the first $N-1$ coordinates of $\be_+$ are a measurable bijection of $\br_{\mathcal D}$. The final row of $\br_+=\bL\be_+$ gives
\begin{equation}
\label{eq:supp_target_innovation}
r_0
=
\sum_{j=1}^{N-1}\ell_{Nj}(\be_+)_j
+\ell_{NN}(\be_+)_N.
\end{equation}
The first term is measurable with respect to $\br_{\mathcal D}$, while $(\be_+)_N=e_{m+1}$ is independent of $\br_{\mathcal D}$ with distribution $F$. Since $F$ has mean zero and unit variance,
independence gives $\E\{(\be_+)_N\mid\br_{\mathcal D}\}=0$ and
$\Var\{(\be_+)_N\mid\br_{\mathcal D}\}=1$, so that
\[
\E(r_0\mid\br_{\mathcal D})
=
\sum_{j=1}^{N-1}\ell_{Nj}(\be_+)_j,
\qquad
\nu(\bs_0)=\ell_{NN}^2.
\]
Using the symmetry of the density $f_e$, and the fact that it is strictly decreasing away from zero, we have that $[-\xi^H_{1-\alpha},\xi^H_{1-\alpha}]$ is the shortest interval with probability at least $1-\alpha$. The oracle intervals in \eqref{eq:supp_oracle} are then obtained by shifting and scaling this interval by the conditional location and scale, followed by the addition of the estimated mean $\widehat f(\bx_0)$.
\end{proof}

\begin{proof}[Proof of Theorem~\ref{thm:eff}]
Lemma~\ref{lem:sequential_whitening} and \eqref{eq:supp_target_innovation} show that the working predictive variance used by the whitened interval is $\widehat\nu_{\mathcal F\cup\mathcal C}(\bs_0)=\ell_{NN}^2=\nu(\bs_0)$. Thus
\begin{equation}
\label{eq:supp_eff_ratio}
\frac{W^{\mathrm W}(\bs_0)}{W^{\mathrm o}(\bs_0)}
=
\frac{\widehat q^{\,\mathrm W}_{1-\alpha}}
     {\xi^H_{1-\alpha}}.
\end{equation}
By \eqref{eq:supp_eff_ratio} the theorem reduces to showing that $\widehat q^{\,\mathrm W}_{1-\alpha}/\xi^H_{1-\alpha}=1+O_{\Prob}(m^{-1/2})$, and since $\xi^H_{1-\alpha}>0$ is fixed, this is equivalent to proving  $\widehat q^{\,\mathrm W}_{1-\alpha}-\xi^H_{1-\alpha}=O_{\Prob}(m^{-1/2})$. We note that for $p_m=j_\alpha/m$, $\widehat q^{\,\mathrm W}_{1-\alpha}-\xi^H_{1-\alpha} = \left(\widehat q^{\,\mathrm W}_{1-\alpha}-H^{-1}(p_m)\right) + \left( H^{-1}(p_m)-\xi^H_{1-\alpha}\right)$, and prove the result by controlling each term separately.

First, note that the transformed calibration scores $|e_1|,\ldots,|e_m|$ are independent with distribution function $H$, and $j_\alpha\le m$ for all sufficiently large $m$, so the adjoined value $+\infty$ does not affect the $j_\alpha$th order statistic. Hence $\widehat q^{\,\mathrm W}_{1-\alpha}$ is the empirical quantile of the sample at level $p_m=j_\alpha/m$. Since $p_m=1-\alpha+O(m^{-1})$ and $H$ has a density that is positive and continuous at $\xi^H_{1-\alpha}$, asymptotic normality of central order statistics \citep[Section~2.3.3]{serfling1980} implies that $\widehat q^{\,\mathrm W}_{1-\alpha}-H^{-1}(p_m)=O_{\Prob}(m^{-1/2})$.

Next, notice that the conformal rank $j_\alpha$ must be an integer. This implies that the level $p_m$ differs from $1-\alpha$ by at most order $1/m$. Since $H^{-1}$ is differentiable at $1-\alpha$, it follows that $H^{-1}(p_m)-\xi^H_{1-\alpha}=O(m^{-1})$. Combining both results, we have $\widehat q^{\,\mathrm W}_{1-\alpha}-\xi^H_{1-\alpha} = O_{\Prob}(m^{-1/2}) + O(m^{-1})$. Since the first quantity dominates the second, we have $\widehat q^{\,\mathrm W}_{1-\alpha}-\xi^H_{1-\alpha}=O_{\Prob}(m^{-1/2})$, which proves \eqref{eq:eff}.
\end{proof}

\section{Coverage when whitening is imperfect}
\label{supp:sec_gap}

Next, we consider what happens when the whitening covariance is not correct.

Let $P_{\be}$ be the actual law of $\be=(e_1,\ldots,e_m,e_{m+1})^{\top}$ and let $Q_{\be}$ be a reference law under which $|e_1|,\ldots,|e_{m+1}|$ are exchangeable and almost surely distinct.

\begin{proof}[Proof of Theorem~\ref{thm:gap}]
By \eqref{eq:supp_cover_iff}, coverage is given by the event
\[
E=\{|e_{m+1}|\le\widehat q^{\,\mathrm W}_{1-\alpha}\},
\]
which depends only on $\be$. Under $Q_{\be}$, the rank argument from the proof of Theorem~\ref{thm:whitened_interval} gives
\begin{equation}
\label{eq:supp_Qcoverage}
\big|Q_{\be}(E)-(1-\alpha)\big|\le\frac{1}{m+1}.
\end{equation}
Total variation distance is the supremum of $|P(A)-Q(A)|$ over all measurable $A$. Hence we have the following inequality: $|P_{\be}(E)-Q_{\be}(E)|\le\TV(P_{\be},Q_{\be})$. 
Combining this inequality with \eqref{eq:supp_Qcoverage} we have
\begin{equation}
\label{eq:supp_gap_tv}
\big|P_{\be}(E)-(1-\alpha)\big|
\le
\frac{1}{m+1}+\TV(P_{\be},Q_{\be}).
\end{equation}
Pinsker's inequality \citep[Lemma~2.5]{tsybakov2009} tells us that total variation distance is bounded above using the KL divergence as follows:
\[
\TV(P_{\be},Q_{\be})
\le
\left\{\frac{\KL(P_{\be}\,\|\,Q_{\be})}{2}\right\}^{1/2}.
\]
Combining this with \eqref{eq:supp_gap_tv} gives us \eqref{eq:coverage_gap_kl}.
\end{proof}

\section{Relative scale and the prospective coverage diagnostic}
\label{supp:sec_loc}

Theorem~\ref{thm:gap} bounds the coverage error, but to evaluate the bound, we need to know the true covariance. However, this information is not available in real data applications. Theorem~\ref{thm:loc} bypasses this problem. It gives a way to identify regions at risk of undercoverage that can be estimated from residuals alone.

Throughout this section, we assume the residual vector to be centred Gaussian with positive-definite covariance, so the transformed errors $e_1,\ldots,e_m$ and $z_0=e_{m+1}$ are jointly Gaussian and nondegenerate, with $e_i\sim N(0,\kappa_i^2)$ and $z_0\sim N(0,\kappa_0^2)$, where $\kappa_0=\kappa(\bs_0)$. Let $G(t)=2\Phi(t)-1$ and $g(t)=2\phi(t)$ denote the distribution function and density of a standard half-normal variable, and write $\xi_{1-\alpha}=G^{-1}(1-\alpha)$. Let $\bm R_m$ be the correlation matrix of $(e_1,\ldots,e_m)$ and
\begin{equation}
\label{eq:supp_vm}
v_m=\frac{\|\bm R_m\|_F^2}{m^2}=\frac1{m^2}\sum_{i,j=1}^m\Corr(e_i,e_j)^2 .
\end{equation}

Coverage is the event that $|z_0|$ falls below the conformal threshold. The proof has three steps. First, we show that the threshold converges to $q_m^\star$. This uses the bounds on the calibration scales and requires $v_m\to0$. Second, coverage at $q_m^\star$ equals $G\{\xi_{1-\alpha}/\krel(\bs_0)\}$. Third, we bound the error from replacing the random threshold by $q_m^\star$. This uses the lower bound on $\kappa(\bs_0)$. We make the following assumptions:

\begin{assumption}[Bounded calibration scales]
\label{supp:ass_scale}
There are constants $0<\kappa_{\min}\le\kappa_{\max}<\infty$ such that
$\kappa_{\min}\le\kappa_i\le\kappa_{\max}$ for all $i$ and $m$.
\end{assumption}

\begin{assumption}[Target scale]
\label{supp:ass_target}
There is a constant $\kappa_{0,\min}>0$ such that $\kappa_0\ge\kappa_{0,\min}$ along the asymptotic
sequence.
\end{assumption}

\begin{assumption}[Weak correlation after whitening]
\label{supp:ass_corr}
$v_m\to0$ as $m\to\infty$.
\end{assumption}

Under a correct Gaussian working covariance, $\bm R_m=\bI$, so $v_m=1/m$, and $\kappa_i=\kappa_0=1$; all three assumptions then hold. Next, we define the empirical distribution function of the absolute calibration errors ($F_m(t)$), its expectation ($\Fbar_m(t)$), and the fixed value the threshold targets ($q_m^\star$) as follows
\begin{equation}
\label{eq:supp_edf}
F_m(t)=\frac1m\sum_{i=1}^m\bone\{|e_i|\le t\},
\qquad
\Fbar_m(t)=\E F_m(t)=\frac1m\sum_{i=1}^mG\!\left(\frac{t}{\kappa_i}\right),
\qquad
q_m^\star=\Fbar_m^{-1}(1-\alpha),
\end{equation}
where the second equality uses the fact that $|e_i|/\kappa_i$ is standard half-normal.

Below we first prove that the conformal threshold converges to $q_m^\star$. We do so with the help of three lemmas. Lemma~\ref{supp:lem_cov} bounds the covariance between the indicators of two correlated Gaussian variables. Lemma~\ref{supp:lem_var} uses it to bound the variance of $F_m(t)$ by $v_m$. Lemma~\ref{supp:lem_quant} turns that variance bound into concentration of the threshold around $q_m^\star$.

\begin{lemma}[Correlated absolute-value indicators]
\label{supp:lem_cov}
Let $(X,Y)$ be centred and jointly Gaussian with positive variances and correlation $\rho$. Then for
all $a,b\ge0$,
\[
\bigl|\Cov\bigl(\bone\{|X|\le a\},\bone\{|Y|\le b\}\bigr)\bigr|\le\frac{\rho^2}{4}.
\]
\end{lemma}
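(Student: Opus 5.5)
The plan is to expand both indicators in Hermite polynomials and apply Mehler's formula. The evenness of the indicators kills the linear term, which is what produces the factor $\rho^2$ rather than $|\rho|$.

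\emph{Reduction.} Write $X=\sigma_X X'$ and $Y=\sigma_Y Y'$ with $X',Y'$ standard normal and correlation $\rho$. Then $\bone\{|X|\le a\}=\bone\{|X'|\le a/\sigma_X\}$, and similarly for $Y$. So we may assume unit variances, replacing $(a,b)$ by $(a/\sigma_X,b/\sigma_Y)$. Put $u(x)=\bone\{|x|\le a\}$ and $w(y)=\bone\{|y|\le b\}$. Both are bounded, so both lie in $L^2(\phi)$, where $\phi$ is the standard normal density.

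\emph{Hermite expansion.} Let $h_k=He_k/\sqrt{k!}$ be the orthonormal probabilists' Hermite polynomials in $L^2(\phi)$. Write $u=\sum_{k\ge0}c_kh_k$ and $w=\sum_{k\ge0}d_kh_k$. Since $u$ and $w$ are even and $h_k$ has parity $(-1)^k$, we get $c_k=d_k=0$ for all odd $k$, in particular for $k=1$. By Parseval,
\[
\sum_{k\ge1}c_k^2=\Var\{u(X)\},
\]
and likewise for $w$. Each of these is the variance of a Bernoulli variable, hence at most $1/4$.

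\emph{Mehler's formula.} For $|\rho|\le1$, Mehler's formula gives $\E\{h_j(X)h_k(Y)\}=\delta_{jk}\rho^k$. Combining this with $L^2$ convergence of the expansions, we obtain
\[
\Cov\{u(X),w(Y)\}=\sum_{k\ge2,\,k\text{ even}}c_kd_k\rho^k .
\]
For $k\ge2$ we have $|\rho|^k\le\rho^2$. Cauchy--Schwarz then gives
\[
|\Cov|\le\rho^2\Bigl(\sum_{k\ge2}c_k^2\Bigr)^{1/2}\Bigl(\sum_{k\ge2}d_k^2\Bigr)^{1/2}\le\rho^2\cdot\tfrac12\cdot\tfrac12=\frac{\rho^2}{4}.
\]

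\emph{Main obstacle.} The only delicate step is justifying the termwise covariance identity, in particular at the boundary $|\rho|=1$. I would handle this by viewing the map $(u,w)\mapsto\E\{u(X)w(Y)\}$ as the bounded bilinear form $\langle u,T_\rho w\rangle_{L^2(\phi)}$. Here $T_\rho$ is the Ornstein--Uhlenbeck (Mehler) operator, which is a contraction with eigenfunctions $h_k$ and eigenvalues $\rho^k$, so the series identity follows by continuity.

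For $\rho=-1$ the operator is $w\mapsto w(-\cdot)$, which coincides with $\rho=1$ for even $w$. Either way, the bound can also be checked directly there: $|\Cov|\le\sqrt{\Var u\,\Var w}\le1/4$.
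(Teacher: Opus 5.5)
Your proposal is correct and follows the paper's proof essentially step for step. Both rescale to unit variances, expand the two even indicators in orthonormal Hermite polynomials, apply Mehler's formula so that only even $k\ge2$ survive, and finish with Cauchy--Schwarz and the Bernoulli variance bound $1/4$. Your remarks on justifying the termwise identity through the Mehler operator and on the boundary case $|\rho|=1$ supply rigor the paper leaves implicit, but they do not change the argument.
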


\begin{proof}
We write $X=\sigma_XU$ and $Y=\sigma_YV$, where $(U,V)$ is bivariate normal, with $\Corr(U,V)=\rho$. We denote the indicators as $f(U)$ and $h(V)$, where $f(u)=\bone\{|u|\le a/\sigma_X\}$ and $h(v)=\bone\{|v|\le b/\sigma_Y\}$. Both of these are square integrable under the standard normal law. Hence, we can expand them in the orthonormal Hermite polynomials as $f=\sum_{k\ge0}c_kH_k$ and $h=\sum_{k\ge0}d_kH_k$. Next, we apply Mehler's formula, which gives $\E\{H_j(U)H_k(V)\}=\rho^k\bone\{j=k\}$. This implies $\Cov\{f(U),h(V)\}=\sum_{k\ge1}c_kd_k\rho^k$. Since both indicators are even, we have $c_k=d_k=0$ for odd $k$, so every remaining term has $k\ge2$ and $|\rho|^k\le\rho^2$. Next, we apply the Cauchy--Schwarz inequality to the coefficient sequences and obtain
\[
\bigl|\Cov\{f(U),h(V)\}\bigr|
\le\rho^2\Bigl(\sum_{k\ge1}c_k^2\Bigr)^{1/2}\Bigl(\sum_{k\ge1}d_k^2\Bigr)^{1/2}
=\rho^2\bigl[\Var\{f(U)\}\Var\{h(V)\}\bigr]^{1/2}
\le\frac{\rho^2}{4}.
\]
Here the last inequality uses the fact that an indicator has variance at most $1/4$.
\end{proof}

\begin{lemma}[Variance of the calibration distribution]
\label{supp:lem_var}
For every $t\ge0$, $\Var\{F_m(t)\}\le v_m/4$.
\end{lemma}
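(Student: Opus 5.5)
Expanding the empirical distribution function gives
\[
\Var\{F_m(t)\}=\frac1{m^2}\sum_{i,j=1}^m\Cov\bigl(\bone\{|e_i|\le t\},\bone\{|e_j|\le t\}\bigr).
\]
The plan is to bound each summand by Lemma~\ref{supp:lem_cov} and then sum, so that the correlation structure enters only through the entries of $\bm R_m$.

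For $i\ne j$, the pair $(e_i,e_j)$ is centred and jointly Gaussian. It has positive variances $\kappa_i^2,\kappa_j^2$, because the transformed errors are nondegenerate, and its correlation is $\Corr(e_i,e_j)$. Applying Lemma~\ref{supp:lem_cov} with $a=b=t\ge0$ gives
\[
\bigl|\Cov\bigl(\bone\{|e_i|\le t\},\bone\{|e_j|\le t\}\bigr)\bigr|\le\Corr(e_i,e_j)^2/4 .
\]
For $i=j$ the summand is the variance of an indicator. This is at most $1/4=\Corr(e_i,e_i)^2/4$, which is also what Lemma~\ref{supp:lem_cov} gives when $\rho=1$ (the Hermite argument there still applies with $U=V$). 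The diagonal terms therefore fit the same bound, with no need to treat them separately.

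Summing the bounds over all $i,j$ and dividing by $m^2$ then yields
\[
\Var\{F_m(t)\}\le\frac1{4m^2}\sum_{i,j}\Corr(e_i,e_j)^2=\frac{v_m}{4},
\]
using the definition \eqref{eq:supp_vm}. The only step needing care is checking that Lemma~\ref{supp:lem_cov} applies to every pair, including the diagonal case and arbitrary $t$ (at $t=0$ the indicators vanish almost surely, so the bound is trivial). With that confirmed, no real obstacle remains and the result is a direct summation.
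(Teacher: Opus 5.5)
Your proposal is correct and matches the paper's proof: expand $\Var\{F_m(t)\}$ as the double sum of indicator covariances, bound each term by $\Corr(e_i,e_j)^2/4$ via Lemma~\ref{supp:lem_cov}, and sum to obtain $v_m/4$. Your explicit check of the diagonal terms, where the indicator variance is at most $1/4$, is a harmless extra detail that the paper leaves implicit.
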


\begin{proof}
By Lemma~\ref{supp:lem_cov},
\[
\Var\{F_m(t)\}
=\frac1{m^2}\sum_{i,j=1}^m\Cov\bigl(\bone\{|e_i|\le t\},\bone\{|e_j|\le t\}\bigr)
\le\frac1{4m^2}\sum_{i,j=1}^m\Corr(e_i,e_j)^2=\frac{v_m}{4}.
\qedhere
\]
\end{proof}

\begin{lemma}[Concentration of the whitened conformal threshold]
\label{supp:lem_quant}
Under Assumption~\ref{supp:ass_scale}, let $c=\kappa_{\max}^{-1}g(2\kappa_{\max}\xi_{1-\alpha}/\kappa_{\min})$ and $\delta_0=\kappa_{\min}\xi_{1-\alpha}/2$. For all sufficiently large $m$ and all $\delta$ with $4/(cm)\le\delta\le\delta_0$,
\begin{equation}
\label{eq:supp_quant}
\Prob\bigl(|\widehat q^{\,\mathrm W}_{1-\alpha}-q_m^\star|>\delta\bigr)\le\frac{2v_m}{c^2\delta^2}.
\end{equation}
\end{lemma}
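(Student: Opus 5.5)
The argument runs through the empirical distribution function $F_m$: concentration of $F_m$ at two deterministic points traps the order statistic $\widehat q^{\,\mathrm W}_{1-\alpha}$. Write $j_\alpha=\lceil(1-\alpha)(m+1)\rceil$ and note that, for large $m$, $j_\alpha\le m$, so the adjoined $+\infty$ plays no role.

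\emph{Step 1: reduce to two events.} Since $\widehat q^{\,\mathrm W}_{1-\alpha}$ is the $j_\alpha$-th order statistic of $|e_1|,\ldots,|e_m|$, we have
\[
\{\widehat q^{\,\mathrm W}_{1-\alpha}>q_m^\star+\delta\}\subseteq\{F_m(q_m^\star+\delta)<j_\alpha/m\},
\qquad
\{\widehat q^{\,\mathrm W}_{1-\alpha}<q_m^\star-\delta\}\subseteq\{F_m(q_m^\star-\delta)\ge j_\alpha/m\}.
\]
Also $j_\alpha/m\in[1-\alpha,\,1-\alpha+2/m]$ for large $m$.

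\emph{Step 2: a margin for $\Fbar_m$.} I need a lower bound on the slope of $\Fbar_m$ near $q_m^\star$. Because $\Fbar_m$ is an average of $G(t/\kappa_i)$ and $\kappa_i\in[\kappa_{\min},\kappa_{\max}]$, the quantile $q_m^\star$ lies in $[\kappa_{\min}\xi_{1-\alpha},\,\kappa_{\max}\xi_{1-\alpha}]$. Take $|t-q_m^\star|\le\delta_0=\kappa_{\min}\xi_{1-\alpha}/2$. Then $t\le\kappa_{\max}\xi_{1-\alpha}+\delta_0\le 2\kappa_{\max}\xi_{1-\alpha}$, so
\[
t/\kappa_i\le 2\kappa_{\max}\xi_{1-\alpha}/\kappa_{\min}.
\]
Since $g$ is decreasing on $[0,\infty)$, the derivative of $\Fbar_m$ satisfies
\[
\Fbar_m'(t)=m^{-1}\sum_i\kappa_i^{-1}g(t/\kappa_i)\ge\kappa_{\max}^{-1}g(2\kappa_{\max}\xi_{1-\alpha}/\kappa_{\min})=c.
\]
By the mean value theorem this gives
\[
\Fbar_m(q_m^\star+\delta)\ge 1-\alpha+c\delta,
\qquad
\Fbar_m(q_m^\star-\delta)\le 1-\alpha-c\delta.
\]

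\emph{Step 3: Chebyshev.} For the upper event, $\delta\ge 4/(cm)$ gives $j_\alpha/m\le 1-\alpha+2/m\le 1-\alpha+c\delta/2$. The event $F_m(q_m^\star+\delta)<j_\alpha/m$ therefore forces
\[
\Fbar_m(q_m^\star+\delta)-F_m(q_m^\star+\delta)>c\delta/2.
\]
By Chebyshev and Lemma~\ref{supp:lem_var}, this has probability at most $(v_m/4)/(c\delta/2)^2=v_m/(c^2\delta^2)$. The lower event is symmetric and even easier, since $j_\alpha/m\ge1-\alpha$ forces
\[
F_m(q_m^\star-\delta)-\Fbar_m(q_m^\star-\delta)\ge c\delta,
\]
which gives the smaller bound $v_m/(4c^2\delta^2)$. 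Summing the two yields the claimed $2v_m/(c^2\delta^2)$.

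The main obstacle is Step 2: the slope lower bound has to be uniform in $m$ and must still hold when $t$ lies slightly away from $q_m^\star$. The bracketing of $q_m^\star$ using the extreme scales, together with the monotonicity of the half-normal density, is what delivers the stated constant $c$. The rounding of $j_\alpha/m$ is absorbed by the restriction $\delta\ge4/(cm)$.
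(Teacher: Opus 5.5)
Your proof is correct and follows the paper's argument essentially step for step. You bracket $q_m^\star$ in $[\kappa_{\min}\xi_{1-\alpha},\kappa_{\max}\xi_{1-\alpha}]$ to get the uniform slope $c$ of $\Fbar_m$ on $[q_m^\star-\delta_0,q_m^\star+\delta_0]$, turn the two tail events of the order statistic into deviations of $F_m$ at the deterministic points $q_m^\star\pm\delta$, absorb the rounding of $j_\alpha/m$ through $\delta\ge4/(cm)$, and finish with Chebyshev and Lemma~\ref{supp:lem_var}. Your direct set inclusions in Step~1 also hold with ties, so you avoid the paper's appeal to their absence.
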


\begin{proof}
Since $\kappa_{\min}\le\kappa_i\le\kappa_{\max}$, we have $\Fbar_m(\kappa_{\min}\xi_{1-\alpha})\le1-\alpha\le \Fbar_m(\kappa_{\max}\xi_{1-\alpha})$. This implies $q_m^\star\in[\kappa_{\min}\xi_{1-\alpha},\kappa_{\max}\xi_{1-\alpha}]$ and $[q_m^\star-\delta_0,q_m^\star+\delta_0]\subseteq[\kappa_{\min}\xi_{1-\alpha}/2,\,2\kappa_{\max}\xi_{1-\alpha}]$. Since $g$ is decreasing on $[0,\infty)$, each summand $\kappa_i^{-1}g(t/\kappa_i)$ of $\Fbar_m'(t)=m^{-1}\sum_i\kappa_i^{-1}g(t/\kappa_i)$ is at least $c$ on that interval. Hence, for $0<\delta\le\delta_0$,
\begin{equation}
\label{eq:supp_Fbar_slope}
\Fbar_m(q_m^\star+\delta)\ge1-\alpha+c\delta,
\qquad
\Fbar_m(q_m^\star-\delta)\le1-\alpha-c\delta.
\end{equation}
For fixed $\alpha$, $j_\alpha\le m$ for all sufficiently large $m$, so the adjoined value $+\infty$ does not affect the $j_\alpha$th order statistic. The absolute values of the calibration scores have no ties almost surely, so $F_m(\widehat q^{\,\mathrm W}_{1-\alpha})=j_\alpha/m$, and $1-\alpha\le j_\alpha/m<1-\alpha+2/m$.

If $\widehat q^{\,\mathrm W}_{1-\alpha}>q_m^\star+\delta$, fewer than $j_\alpha$ scores lie at or below $q_m^\star+\delta$, so $F_m(q_m^\star+\delta)\le(j_\alpha-1)/m<1-\alpha+2/m$. By \eqref{eq:supp_Fbar_slope}, $F_m(q_m^\star+\delta)-\Fbar_m(q_m^\star+\delta)<2/m-c\delta\le-c\delta/2$ whenever $\delta\ge4/(cm)$. If $\widehat q^{\,\mathrm W}_{1-\alpha}<q_m^\star-\delta$, then $F_m(q_m^\star-\delta)\ge j_\alpha/m\ge1-\alpha$, so $F_m(q_m^\star-\delta)-\Fbar_m(q_m^\star-\delta)\ge c\delta$. We can bound both probabilities using Chebyshev's inequality and Lemma~\ref{supp:lem_var}. The first one is bounded by $(v_m/4)/(c\delta/2)^2=v_m/(c^2\delta^2)$ and the second by $v_m/(4c^2\delta^2)$. Combining them, we have \eqref{eq:supp_quant}.
\end{proof}

Lemma~\ref{supp:lem_quant} shows that the conformal threshold is close to $q_m^\star$ with high probability. We now use this to prove Theorem~\ref{thm:loc}.

\begin{proof}[Proof of Theorem~\ref{thm:loc}]
By \eqref{eq:supp_cover_iff}, coverage is the event $\{|z_0|\le\widehat q^{\,\mathrm W}_{1-\alpha}\}$. Since $z_0\sim N(0,\kappa_0^2)$,
\begin{equation}
\label{eq:supp_leading}
\Prob(|z_0|\le q_m^\star)
=G\!\left(\frac{q_m^\star}{\kappa_0}\right)
=G\!\left\{\frac{\xi_{1-\alpha}}{\krel(\bs_0)}\right\}.
\end{equation}
On the event $\{|\widehat q^{\,\mathrm W}_{1-\alpha}-q_m^\star|\le\delta\}$, the events $\{|z_0|\le\widehat q^{\,\mathrm W}_{1-\alpha}\}$ and $\{|z_0|\le q_m^\star\}$ can differ only if $|z_0|\in(q_m^\star-\delta,q_m^\star+\delta]$. The density of $|z_0|$ is at most $(2/\pi)^{1/2}/\kappa_{0,\min}$ under Assumption~\ref{supp:ass_target}. Hence
\begin{equation}
\label{eq:supp_symdiff_bound}
\left|
\Prob\{|z_0|\le\widehat q^{\,\mathrm W}_{1-\alpha}\}-\Prob(|z_0|\le q_m^\star)
\right|
\le
\frac{2\delta}{\kappa_{0,\min}}\left(\frac2\pi\right)^{1/2}
+\Prob\bigl(|\widehat q^{\,\mathrm W}_{1-\alpha}-q_m^\star|>\delta\bigr).
\end{equation}
Take $\delta=v_m^{1/3}$. Since $v_m\ge1/m$, this $\delta$ exceeds $4/(cm)$ for large $m$, and since $v_m\to0$ it falls below $\delta_0$. Lemma~\ref{supp:lem_quant} bounds the last term by $2v_m^{1/3}/c^2$, so the right side of \eqref{eq:supp_symdiff_bound} is $O(v_m^{1/3})$. Combining with \eqref{eq:supp_leading} proves \eqref{eq:loc}.
\end{proof}

$\krel(\bs_0)$ in Theorem~\ref{thm:loc} characterizes the expected coverage. If it exceeds one, the transformed target error is large relative to the threshold, and that can lead to undercoverage. Similarly, for a value below one, it predicts conservative coverage. In the next subsection, we turn this characterization into a prospective diagnostic.

\subsection{Prospective implementation of the diagnostic}
\label{supp:impl_prosp}
Here, we develop a way to estimate $\krel$ from the data. 
The denominator of $\krel$ is a function of the calibration set, and is already computed as a part of the conformal inference. The numerator denotes the scale of the transformed target error. To estimate it, we hold out a set $\mathcal G$ of diagnostic observations from the target region. We do not use them for fitting, calibration or evaluation. Let $\mathcal D=\mathcal F\cup\mathcal C$. We transform each $j\in\mathcal G$ separately as a target given $\mathcal D$:
\begin{equation}
\label{eq:supp_diag_error}
e_j^{\mathrm{diag}}
=
\frac{
r_j-\bSigma_{j,\mathcal D}^{\mathrm w}
(\bSigma_{\mathcal D,\mathcal D}^{\mathrm w})^{-1}
\br_{\mathcal D}}
{
\left\{
\bSigma_{j,j}^{\mathrm w}
-
\bSigma_{j,\mathcal D}^{\mathrm w}
(\bSigma_{\mathcal D,\mathcal D}^{\mathrm w})^{-1}
\bSigma_{\mathcal D,j}^{\mathrm w}
\right\}^{1/2}}.
\end{equation}
No diagnostic observation conditions on another, so each $e_j^{\mathrm{diag}}$ uses the same information as the error at a future target. With $n_{\mathrm{diag}}=|\mathcal G|$, we estimate $\krel$ as
\begin{equation}
\label{eq:supp_krel_hat}
\widehat\krel
=
\frac{\xi_{1-\alpha}
\left\{n_{\mathrm{diag}}^{-1}\sum_{j\in\mathcal G}(e_j^{\mathrm{diag}})^2\right\}^{1/2}}
{\widehat q^{\,\mathrm W}_{1-\alpha}},
\qquad
\widehat p_{\mathrm{cov}}
=
G\!\left(\frac{\xi_{1-\alpha}}{\widehat\krel}\right).
\end{equation}

Suppose, in addition to the assumptions of Theorem~\ref{thm:loc}, that the diagnostic and target errors share the same scale and the correlation matrix $\bm R_{\mathcal G}$ of the diagnostic errors satisfies $\|\bm R_{\mathcal G}\|_F^2/n_{\mathrm{diag}}^2\to0$. Then $n_{\mathrm{diag}}^{-1}\sum_{j\in\mathcal G}(e_j^{\mathrm{diag}})^2/\kappa(\bs_0)^2\to1$ in probability, which together with Lemma~\ref{supp:lem_quant} and Theorem~\ref{thm:loc} shows that $\widehat p_{\mathrm{cov}}$ is consistent for the coverage. If the scales vary across the region, $\widehat\krel$ uses their root-mean-square, and $\widehat p_{\mathrm{cov}}$ is a regional approximation to the coverage averaged over the region.

\section{Simulation designs and implementation choices}
\label{supp:sec_impl}

\subsection{Data-generating mechanisms}
\label{supp:designs}

Section~\ref{sec:sim_setup} gives the general data-generating mechanism, including the mean function, the covariance family \eqref{eq:matern} and the random-sampling designs of Regime A. Below we specify the nonrandom designs of Regimes B and C. Except under the varying range design, all of them use the exponential correlation with $\phi=2$.

\subsubsection{Regime B: predictive information in calibration}

\begin{description}\itemsep2pt
\item[Halo $k$, $k\in\{10,20,40\}$.] For each target in turn, its $k$ nearest locations enter the calibration set. In case of overlapping neighbourhoods, we count them once, and exclude any target that falls in this neighbourhood. If this neighbourhood set exceeds $166$ locations, we keep the first $166$ in that order. Otherwise, we randomly choose from the remaining locations to fill it to $166$.
\item[Transect.] The $330$ fitting locations have $s_1\sim\operatorname{Unif}(0,1)$ and $s_2$ cycling over $\{0.1,0.3,0.5,0.7,0.9\}$ with independent $N(0,0.012^2)$ jitter, which gives $66$ observations per line. The remaining $171$ locations have uniform $s_1$ and $s_2$ cycling over $\{0.2,0.4,0.6,0.8\}$ with independent $N(0,0.02^2)$ jitter. Five of them are targets and the rest form the calibration set.
\item[Boundary.] The $330$ fitting locations are uniform on $[0.12,0.88]^2$. The remaining $171$ are uniform on the rest of $[0,1]^2$. Five of them are targets and the rest form the calibration set.
\item[Monitoring network.] The $330$ fitting locations are split equally among discs of radius $0.07$ centred at $(0.22,0.22)$, $(0.78,0.25)$ and $(0.5,0.8)$. The remaining $171$ locations come from a uniform distribution on $[0,1]^2$, and maintain at least $0.18$ distance from every centre. The first five of these are targets and the rest form the calibration set.
\end{description}

\subsubsection{Regime C: calibration--target comparability}
Locations are uniform on $[0,1]^2$ under quadrant, clustered calibration, block holdout and varying range.

\begin{description}\itemsep2pt
\item[Disjoint halves.] The $330$ fitting locations are uniform on $[0,0.45]\times[0,1]$. The remaining $171$ are uniform on $[0.55,1]\times[0,1]$, and five of them are targets.
\item[Quadrant.] The five targets are drawn from $\{s_1>0.5,s_2>0.5\}$. The other locations in this quadrant enter calibration, which is filled to $166$ from outside the quadrant.
\item[Clustered calibration.] The calibration set contains the $166$ locations nearest $(0.3,0.3)$. The five targets are drawn from the $120$ locations farthest from that point.
\item[Distant patch.] The $166$ calibration locations are uniform on the disc of radius $0.10$ centred at $(0.13,0.13)$. The other $335$ are uniform on $[0.42,1]^2$, and five of them are targets.
\item[Block holdout.] The five targets are drawn from $\{s_1>0.5,s_2>0.5\}$, and all other locations in this quadrant are removed. Of the locations outside it, $166$ chosen at random form the calibration set and the rest, about $210$, form the fitting set.
\item[Varying range.] The five targets are drawn from $\{s_1\ge0.5\}$, and the rest are split at random. The residual process consists of two independent exponential fields on either side of $s_1=0.5$, with $\phi=1$ on the left and $\phi=10$ on the right.
\end{description}

\subsection{Implementation choices and sensitivity}
\label{supp:impl}

\subsubsection{Software and covariance estimation}
\label{supp:impl_settings}

All methods fit the mean with \texttt{ranger} using $500$ trees and default settings. For whitening, the parameters of the exponential working covariance are estimated with \texttt{BRISC\_estimation}, using an intercept-only mean and the package defaults, which use $15$ nearest neighbours and locations ordered by the sum of their coordinates. GSCP and LSCP use the covariance-estimation routine supplied with \texttt{scp}.

\subsubsection{Choice of LSCP bandwidth}
\label{supp:impl_comp}

LSCP uses a kernel with bandwidth $h$ to weight each calibration observation based on its distance to the target. If $h$ is too small, the calibration observations carry too little weight for the weighted quantile to be finite, and the interval is unbounded. If $h$ is too large, LSCP behaves like GSCP. \citet{mao2024} navigate this tradeoff by choosing $h$ to minimise the average interval score over validation observations. In the split version only the calibration observations carry weight, so we validate on calibration observations and weight each one by the others. Their candidate grid gives no bounded interval at our sampling density, so we extend it to half the largest distance between calibration observations.

However, in Regime C the selected $h$ can be far too small. Since $h$ is selected using the calibration observations alone, the problem worsens the farther the calibration observations lie from the targets. Table~\ref{supp:tab_eta} shows that under block holdout, the selected $h$ produces unbounded intervals at $8\%$ of the targets. This rises to $76\%$ under clustered calibration and to every target under distant patch. An unbounded interval says nothing about the target, and counting it as covered inflates the coverage, to $0.986$ under clustered calibration. To bypass this problem, we use a fixed $h=0.40$ throughout the simulation. This produces similar results to the selected $h$ on Regimes A and B in terms of coverage and width. In Regime C, this choice of $h$ produces bounded intervals in all but the distant patch design, where it is unbounded at only $25\%$ of the targets. We can in theory reduce this by choosing an even larger bandwidth, but a larger $h$ increasingly moves LSCP toward GSCP on every other design, so we keep $h=0.40$.

\begin{table}[t]
\centering
\small
\setlength{\tabcolsep}{3pt}
\caption{LSCP with fixed bandwidth $h=0.40$ and selected bandwidth. Coverage/mean width are computed at nominal $0.90$. Unbounded intervals count as covering their target and are excluded from the mean width. $h=0.40$ is unbounded only under distant patch, at $24.8\%$ of targets, where its entry is computed on the bounded intervals.}
\label{supp:tab_eta}
\begin{tabular}{lcccc}
\toprule
Design & Fixed $h=0.40$ & Selected $h$ & Selected unbounded (\%) & Median selected $h$\\
\midrule
Exponential, $\phi=0.5$ & $.909/1.80$ & $.908/1.80$ & $0.0$ & $0.50$\\
Exponential, $\phi=1$   & $.918/1.90$ & $.915/1.90$ & $0.0$ & $0.52$\\
Exponential, $\phi=2$   & $.922/2.07$ & $.922/2.08$ & $0.0$ & $0.50$\\
Exponential, $\phi=6$   & $.911/2.50$ & $.908/2.50$ & $0.0$ & $0.48$\\
Exponential, $\phi=8$   & $.910/2.62$ & $.914/2.64$ & $0.0$ & $0.48$\\
Exponential, $\phi=30$  & $.911/3.47$ & $.910/3.47$ & $0.0$ & $0.50$\\
\Matern{}, $\nu=3/2$    & $.919/2.08$ & $.922/2.09$ & $0.0$ & $0.49$\\
\Matern{}, $\nu=5/2$    & $.909/1.99$ & $.904/1.98$ & $0.0$ & $0.51$\\
Gaussian                & $.920/2.03$ & $.917/2.03$ & $0.0$ & $0.52$\\
\midrule
Halo $10$               & $.907/2.30$ & $.907/2.30$ & $0.0$ & $0.50$\\
Halo $20$               & $.910/2.41$ & $.913/2.40$ & $0.0$ & $0.44$\\
Halo $40$               & $.916/2.62$ & $.925/2.64$ & $0.0$ & $0.34$\\
Transect                & $.912/2.30$ & $.917/2.32$ & $0.0$ & $0.36$\\
Boundary                & $.910/2.57$ & $.914/2.60$ & $0.0$ & $0.38$\\
Monitoring network       & $.917/3.14$ & $.921/3.14$ & $0.0$ & $0.28$\\
\midrule
Disjoint halves         & $.910/3.24$ & $.928/3.28$ & $0.0$ & $0.18$\\
Quadrant                & $.902/2.79$ & $.902/2.79$ & $0.0$ & $0.38$\\
Clustered calibration   & $.923/2.18$ & $.986/2.46$ & $75.8$ & $0.17$\\
Distant patch           & $.876/2.14$ & unbounded   & $100.0$ & $0.04$\\
Block holdout           & $.880/2.89$ & $.899/2.91$ & $7.9$ & $0.44$\\
Varying range           & $.902/2.59$ & $.906/2.64$ & $0.0$ & $0.36$\\
\bottomrule
\end{tabular}
\end{table}

\subsubsection{Factorization and ordering}
\label{supp:impl_ours}
Under exact whitening, the coverage and the distribution of the interval width do not depend on the ordering of the calibration locations or the factor used for whitening. In practice the covariance is estimated and the factor may be approximate, as with NNGP, so the interval may depend on both. The main results use the order supplied by the split, which does not depend on the responses. We compare it with a fresh random permutation and with two structured orderings, by coordinate sum and by maximin distance. We also replace the dense Cholesky factor with the NNGP factor with $K=25$ under the split order. Table~\ref{supp:tab_ours} shows that all five configurations give nearly identical coverage and width.

\begin{table}[t]
\centering
\small
\caption{Effect of factorization and ordering in whitening. Coverage and mean width are computed at nominal $0.90$, and averaged over the nine Regime A designs.}
\label{supp:tab_ours}
\begin{tabular}{llcc}
\toprule
Factor & Ordering & Coverage & Mean width\\
\midrule
Dense Cholesky & as supplied by the split & $0.904$ & $2.113$\\
Dense Cholesky & coordinate sum           & $0.906$ & $2.115$\\
Dense Cholesky & maximin                  & $0.904$ & $2.113$\\
Dense Cholesky & random                   & $0.905$ & $2.113$\\
NNGP, $K=25$   & as supplied by the split & $0.905$ & $2.116$\\
\bottomrule
\end{tabular}
\end{table}

\subsubsection{Departure from Gaussianity}
\label{supp:impl_ngauss}

We replace the Gaussian field by the multivariate $t$ field $r=W^{1/2}z$, where $z$ is a Gaussian field with covariance \eqref{eq:matern} and $W\sim\operatorname{IG}(\nu_t/2,\nu_t/2)$ is independent of $z$. We take $\nu_t\in\{5,10\}$, with the exponential correlation at $\phi=2$ and the Gaussian correlation at $\phi=6$. Table~\ref{supp:tab_ngauss} shows that all methods retain near-nominal coverage and that whitening gives the narrowest interval in every setting.

\begin{table}[t]
\centering
\small
\caption{Coverage/mean width at nominal coverage $0.90$ under a spatial $t$ process.}
\label{supp:tab_ngauss}
\begin{tabular}{llcccc}
\toprule
Correlation & $\nu_t$ & Naive & GSCP & LSCP & Whitened\\
\midrule
Exponential, $\phi=2$ & $10$ & $.899/3.27$ & $.903/2.13$ & $.909/2.19$ & $.903/\mathbf{2.04}$\\
Exponential, $\phi=2$ & $5$  & $.906/3.45$ & $.906/2.26$ & $.915/2.34$ & $.912/\mathbf{2.18}$\\
Gaussian, $\phi=6$    & $10$ & $.906/3.95$ & $.901/2.10$ & $.909/2.18$ & $.904/\mathbf{2.02}$\\
Gaussian, $\phi=6$    & $5$  & $.902/4.17$ & $.909/2.20$ & $.914/2.26$ & $.906/\mathbf{2.12}$\\
\bottomrule
\end{tabular}
\end{table}

\section{Fine particulate matter monitoring network data processing}
\label{supp:data}

\paragraph{Data.}
We use 2022 annual mean PM$_{2.5}$ concentrations from the Environmental Protection Agency's Air Quality System. We only consider monitors with at least $250$ daily observations and keep one monitor per site. We keep monitors in the contiguous United States, together with the three monitors in Mexico that the agency reports under state code $80$. 

\paragraph{Covariates.}
The agency's site listing provides elevation, land use and location setting for each monitor. Land use is a categorical variable that records the main use of the land around the monitor, such as commercial, residential, industrial, desert, forest or mobile. Location setting records whether the monitor is in an urban, suburban or rural area. We also include wind speed from TerraClimate (2022 annual mean) and population density from WorldPop (2020), both extracted at the monitor coordinates.

\paragraph{Coordinates.}
We cannot directly use the Euclidean distance between raw latitude and longitude, since the Earth's curvature makes it a distorted measure of distance on the ground. We therefore project the coordinates using the Albers equal-area conic projection for the contiguous United States. This provides planar coordinates in kilometres with minimal domain-wide distance distortion. We divide both coordinates by their largest range, $4547$ km, so that the monitors lie in the unit square. This puts the coordinates on the same scale as the simulations, and LSCP uses the same bandwidth $h=0.40$.
\end{document}